\definecolor{grey}{RGB}{217,217,217}
\newacronym{cnn}{CNN}{convolutional neural network}
\newacronym{csc}{CSC}{convolutional sparse coding}
\newacronym{ct}{CT}{computed tomography}
\newacronym{mri}{MRI}{magnetic resonance imaging}
\newacronym{snr}{SNR}{signal-to-noise ratio}
\newacronym{psnr}{PSNR}{peak signal-to-noise ratio}
\newacronym{ssim}{SSIM}{structural similarity index Measure}
\newacronym{ista}{ISTA}{iterative shrinkage-thresholding algorithm}
\newacronym{sisr}{SISR}{single-image super-resolution}
\newacronym{siso}{SISO}{single-input single-output}
\newacronym{mimo}{MIMO}{multiple-input multiple-output}
\begin{document}
\title{Learning Multiscale Convolutional Dictionaries for Image Reconstruction}

\author{\IEEEauthorblockN{
Tianlin Liu\IEEEauthorrefmark{1},
Anadi Chaman\IEEEauthorrefmark{2}, 
David Belius\IEEEauthorrefmark{1} and
Ivan Dokmani\'c\IEEEauthorrefmark{1}\IEEEauthorrefmark{2}}

\IEEEauthorblockA{\IEEEauthorrefmark{1}Department of Mathematics and Computer Science, University of Basel.} 

\IEEEauthorblockA{\IEEEauthorrefmark{2}Department of Electrical and Computer Engineering, University of Illinois at Urbana-Champaign.}%
\thanks{This work was supported by the European Research Council Starting Grant 852821—SWING. The code to reproduce our experiments is available at \url{https://github.com/liutianlin0121/MUSC}}
}

\maketitle

\begin{abstract}
Convolutional neural networks (CNNs) have been tremendously successful in solving imaging inverse problems. To understand their success, an effective strategy is to construct simpler and mathematically more tractable convolutional sparse coding (CSC) models that share essential ingredients with CNNs. Existing CSC methods, however, underperform leading CNNs in challenging inverse problems. We hypothesize that the performance gap may be attributed in part to how they process images at different spatial scales: While many CNNs use multiscale feature representations, existing CSC models mostly rely on single-scale dictionaries. To close the performance gap, we thus propose a multiscale convolutional dictionary structure. The proposed dictionary structure is derived from the U-Net, arguably the most versatile and widely used CNN for image-to-image learning problems. We show that incorporating the proposed multiscale dictionary in an otherwise standard CSC framework yields performance competitive with state-of-the-art CNNs across a range of challenging inverse problems including CT and MRI reconstruction. Our work thus demonstrates the effectiveness and scalability of the multiscale CSC approach in solving challenging inverse problems.
\end{abstract}

\IEEEpeerreviewmaketitle

\section{Introduction}
\Glspl{cnn} obtain state-of-the-art performance in many image processing tasks. To understand their success, an active line of recent research reduces \glspl{cnn} into conceptually simpler and mathematically better-understood building blocks. Examples of these simplified convolutional models include \emph{convolutional kernels} \cite{Mairal2014convolutional, Mairal2016end, Bietti2019group}, \emph{convolutional scattering transforms} \cite{Mallat2012group, Bruna2013invariant, Zarka2020Deep, Zarka2021separation}, and \emph{convolutional sparse coding} \cite{Papyan2017convolutional,Papyan2018theoretical,Dror2019rethinking}. In addition to being mathematically tractable, these models have achieved remarkable empirical success, sometimes matching state-of-the-art \glspl{cnn}. 

This work studies convolutional representations arising from the \gls{csc} paradigm, which provides a natural connection between sparse representation models and \glspl{cnn}. Indeed, many \gls{cnn} instances can be interpreted as optimizing a \gls{csc} objective through cascaded layers \cite{Papyan2017convolutional}. Moreover, \gls{csc} models compete favorably with state-of-the-art \glspl{cnn} in several image processing tasks including denoising, single image super-resolution, and inpainting \cite{Degraux2017online,Chun2018convolutional, Dror2019rethinking, Zisselman2019local, ReyOtero2020variations, Chodosh2020when, Tolooshams2020convolutional, Arun2020cnn, Huang2020learning}.

While these emerging results are promising, the successful applications of \gls{csc} in imaging inverse problems are still confined to problems with relatively simple forward operators, including Gaussian noise addition, blurring, and uniformly random pixel removal. Common to these forward operators is their \emph{spatial locality} -- they introduce artifacts that are spatially correlated only, if at all, within small pixel neighbourhoods. By contrast, a broad range of imaging inverse problems involve forward models that mix distant parts of the image and are \emph{highly spatially heterogeneous}; examples include the Radon transform for computed tomography, which computes line integrals along radiating paths, and the Fourier transform for magnetic resonance imaging, which computes inner products with globally-supported sinusoids. Working with these forward models presents different challenges since they introduce structured noise, such as streak artifacts, with long-range spatial correlations. We thus ask a natural question: Can \gls{csc} models also yield strong performance on such inverse problems with non-local operators?

To deal with spatially heterogeneous imagery data, one natural strategy is to employ \emph{multiscale} dictionaries. Indeed, seminal works have shown that multiscale dictionaries, either analytical or learned, are advantageous in representing and processing images \cite{Mallat1989theory, Candes2004newtight, Starck2002curvelet, Mairal2008learning, Boaz2011multi, Maggioni2016multiscale}. Separating scales is useful because it gives efficient descriptions of structural correlations at different distances. Yet, these existing \gls{csc} models \cite{Dror2019rethinking, Lecouat2020fully, Lecouat2020flexible, Khatib2021learned, Scetbon2021deep} mostly employ single-scale dictionaries, whose dictionary atoms all have the same size. While there exist proposals for multiscale \gls{csc} architectures, they are tailored for specific tasks \cite{Li2018video, Chang2018unsupervised}. In addition, \gls{csc} models do away with flexible skip connections between non-consecutive layers, which are nonetheless essential for many successful \glspl{cnn} such as the U-Net and its variants \cite{Ronneberger2015unet, Falk2019unet, Zhou2020unetplusplus} to fuse features across scales. This challenge of harnessing multiscale features in the \gls{csc} paradigm motivates our work.

To address the challenge, we introduce a multiscale convolutional dictionary inspired by the highly successful U-Net \cite{Ronneberger2015unet}. We then apply the multiscale convolutional dictionary to challenging, large-scale inverse problems in imaging. The main contribution of this paper is twofold:

\begin{itemize}
    \item We propose a new convolutional dictionary, whose representation incorporates atoms of different spatial scales. The proposed multiscale dictionary augments standard, single-scale convolutional dictionaries to exploit the spatially-heterogeneous properties of images.  
    
    \item We study the effectiveness of the multiscale convolutional dictionary through experiments on large-scale datasets. We find that the performance of the multiscale \gls{csc} approach is competitive with leading \glspl{cnn} on datasets including two major CT and MRI benchmarks. We additionally show that our model matches (and slightly improves) the state-of-the-art performance on the deraining task achieved by a deep neural network \cite{Ren2019progressive}. Notably, the single-scale CSC model performs significantly worse on this task \cite{Khatib2021learned}.
\end{itemize}

Overall, our work makes a step forward in closing the performance gap between end-to-end \glspl{cnn} and sparsity-driven dictionary models. At a meta level, it (re)validates the fundamental role of sparsity in representations of images and imaging operators \cite{beylkin1991fast,bruckstein2009sparse,Candes2004newtight}.

The rest of this article is organized as follows. In Section \ref{sec:background}, we first briefly review the sparse representation model and its relationship to \glspl{cnn}. Section \ref{sec:method} explains how we incorporate multiscale atoms in a dictionary model; we also explain how to learn the multiscale dictionary from data under the task-driven dictionary learning framework. Section \ref{sec:experiments} reports experimental results on tasks including CT reconstruction and MRI reconstruction. 

\section{Background and Related Work \label{sec:background}}
In this section, we briefly review the related work; a summary of notation is given in Table~\ref{table_natations}.

\subsection{Sparse representation models} Sparse representation has been extensively studied and widely used in imaging inverse problems \cite{lustig2008compressed,charlety2013global, Liebgott2013Prebeamformed}. It is motivated by the idea that many signals, images being a prime example, can be approximated by a linear combination of a few elements from a suitable overcomplete basis. The sparse representation framework posits that we can decompose a signal of interest\footnote{For simplicity we write all signals as 1d vectors, but the formulation is valid in any dimension.} $\vz \in \RR^d$ as $\vz = \mD \valpha$, where $\mD \in \RR^{d \times N}$ is an overcomplete dictionary of $N$ atoms ($N > d$) and $\valpha \in \RR^N$ is a sparse vector with few non-zero entries. Learning a sparse representation model thus comprises two sub-problems: (i) given a dictionary $\mD$, encode the signal $\vz$ into a sparse vector $\valpha$ (the \emph{sparse coding} problem), and (ii) given a set of signals, learn an appropriate dictionary $\mD$ that sparsifies them (the \emph{dictionary learning} problem). We briefly review these two problems and show how they are related to neural network models such as \glspl{cnn}.

\subsection{The sparse coding problem} The sparse coding problem is often formulated as basis pursuit denoising \cite{Chen1994basis} or Lasso regression \cite{Tibshirani1996regression}. Most relevant to our work is its formulation with non-negative constraints on the sparse code $\valpha$:
\begin{equation}
  \label{eq:lasso}
  \minimize_{\valpha \geq 0}~~ \frac{1}{2}\norm{\vz - \mD \valpha}_2^2 + \lambda \norm{\valpha}_1.
\end{equation}
Here, the first term $\frac{1}{2}\norm{\vz - \mD \valpha}_2^2$ ensures that the code $\valpha$ yields a faithful representation of $\vz$, the second term $\lambda \norm{\valpha}_1$ controls the sparsity of the code, and the two terms are balanced by a parameter $\lambda > 0$. 
An effective solver for the minimization problem \eqref{eq:lasso} is the \gls{ista} \cite{Daubechies2004iterative}, which executes the following iteration
\begin{equation}
  \label{eq:ista-execution}
\begin{split}
  \valpha^{[k+1]} & \coloneqq \Scal(\valpha^{[k]}, \vz; \mD, \vlambda) \\
  & \coloneqq \sigma (\valpha^{[k]} + \eta \mD^\top(\vz - \mD \valpha^{[k]}) - \eta \vlambda),
\end{split}
\end{equation}
where the superscript $[k]$ denotes the iteration number, $\eta$ is a step-size parameter, $\vlambda$ is a vector whose entries are all $\lambda$, and $\sigma(x) \coloneqq \max (x,0)$ is a component-wise rectifier function. For simplicity, we use $\Scal(\valpha, \vz; \mD, \vlambda)$ to denote one execution of ISTA with measurement $\vz$, sparse code $\valpha$, dictionary $\mD$, and threshold $\vlambda$. The ISTA algorithm is a composition of such executions; we write $\mathrm{ISTA}_K$ for the $K$-fold composition of $\Scal$ with itself:
\begin{equation} \label{eq:ista-k-steps}
\begin{split}
& \ista_{K} (\vz; \mD, \vlambda ) \\
& \coloneqq \underbrace{\Big ( \Scal ( \cdot, \vz; \mD, \vlambda) \circ\cdots\circ \Scal ( \cdot, \vz; \mD, \vlambda) \Big )}_{K \text{ times}} (\valpha^{[0]}),
\end{split}
\end{equation}
where $\valpha^{[0]}$ is the initial sparse code; throughout this work, this initial code $\valpha^{[0]}$ is assumed to contain zero in all entries. We emphasize that ISTA is a nonlinear transform of its input $\vz$.

\begin{table}[!tbp] \normalsize
	\caption{The notations used in this paper.}
	\centering
		\begin{tabular}{|ll|} \hline \rule{0mm}{4mm}
		    $d$ & the dimension of an image \\
		    $N$ & the dimension of a sparse code \\
		    $M$ & the number of training samples \\
		    $K$ & the number of sparse coding steps \\
		    $\sigma(\cdot)$ & the ReLU non-linearlity \\
		    $\vz$ & a noisy image to be processed \\
			$\mD$ & an overcomplete dictionary \\
		    $\vgt$ & a ground-truth image \\
		    $\vapprox$ & a predicted image \\
			$\valpha$ & a sparse code \\
		    $\vlambda$ & the thresholding parameters of ISTA \\[1mm] 
			\hline
		\end{tabular}
	\label{table_natations}
\end{table}

\subsection{The task-driven dictionary learning problem}
We now briefly recall the task-driven dictionary learning framework \cite{Mairal2012task}. Consider a supervised learning setting, in which we aim to identify a parametric function that associates each input $\vz$ (e.g., a corrupted image) with its target $\vgt$ (e.g., a clean image) for all $(\vz, \vgt) \in \RR^d \times \RR^d$ drawn from some joint distribution. In the task-driven framework, we proceed by first representing the signal $\vz$ by a sparse code $\valpha_{\vz}$ with respect to a dictionary  $\mD$. One way to achieve this is to let
\begin{equation} \label{eq:sparse_coding_objective}
\valpha_{\vz} \coloneqq \argmin_{\valpha \geq 0} \frac{1}{2}\norm{\vz - \mD \valpha}_2^2 + \lambda \norm{\valpha}_1,    
\end{equation}
which can be approximated by $K$ iterations of \gls{ista} as in \eqref{eq:ista-k-steps}. Next, we approximate the desired target $\vgt$ using the sparse code $\valpha_{\vz}$ through a regression model $f( \cdot , \vw)$ with learnable parameter $\vw$. For instance, $f( \cdot , \vw)$ could be a linear regression model with weights and biases $\vw$. The model output $f( \valpha_{\vz} , \vw)$ thus depends on the regression model parameters $\vw$ as well as the sparse code $\valpha_{\vz}$, which in turn depends on the dictionary $\mD$ through the ISTA iterations. In this way, the regression parameters $\vw$ and dictionary $\mD$ can be \emph{jointly} optimized, for instance, with respect to the quadratic loss objective evaluated on a dataset of $M$ input-target pairs $\{(\vz_i, \vgt_i)\}_{i = 1}^M$:
\begin{equation} \label{eq:task_driven_objective}
\minimize_{\vw, \mD}~~ \frac{1}{2M} \sum_{i = 1}^M  \| f( \valpha_{\vz_i} , \vw) -  \vgt_i \|_2^2 .    
\end{equation}
Importantly, the task-driven objective in \eqref{eq:task_driven_objective} implies that the dictionary $\mD$ is optimized to solve the supervised learning task and not just to sparsely represent data.
\begin{figure*}[t]
\centering
\includegraphics[width= \textwidth]{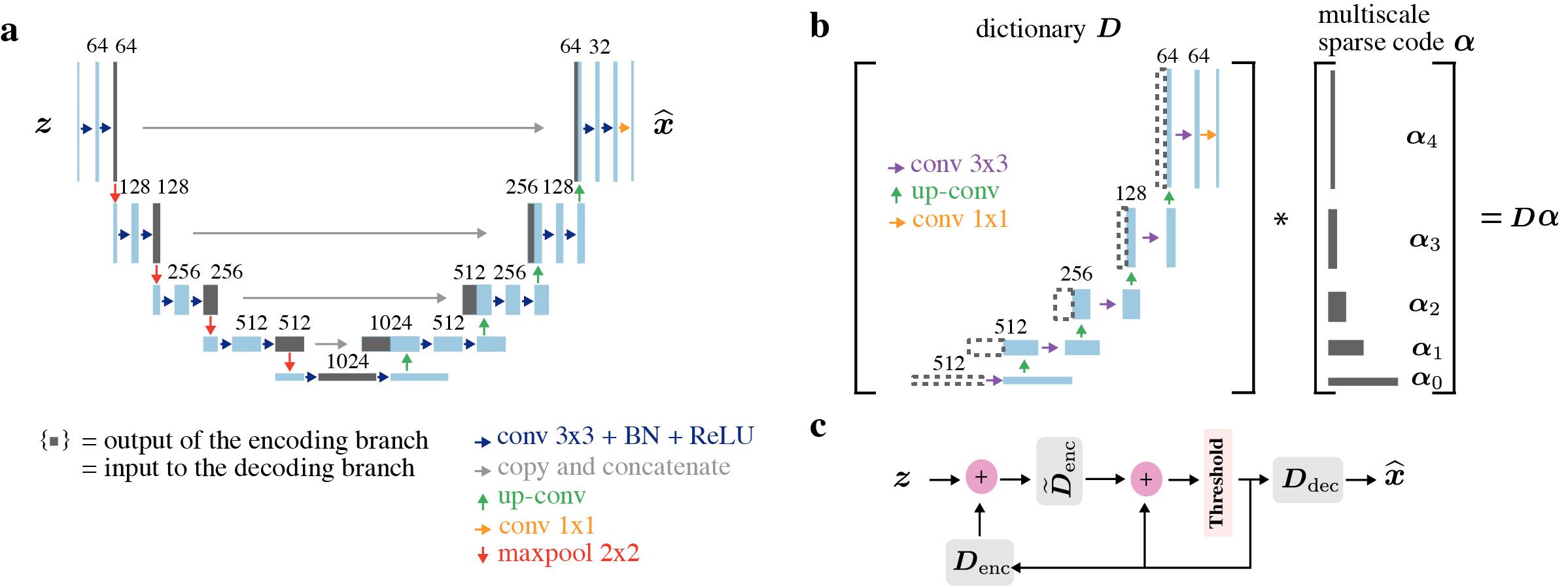}
\caption{\textbf{Schematic illustration of the U-Net (left panel) and the dictionary model considered in this work (right panel).} (\textbf{a}): The U-Net processes input images using convolution, scale separations, and skip connections in conjunction with ReLU non-linearities and batch-normalization (BN) modules indicated by colored arrows. The multi-channel feature maps produced by these operations are illustrated as boxes with the channel numbers indicated on the top of these boxes. Dark grey boxes indicate the feature maps produced by the encoding branch of the U-Net, which are sent to the decoding branch either through channel-wise concatenation (``skip connection'') or through the bottleneck layer. (\textbf{b}): The dictionary considered in this work is a simplification of the decoder branch of the U-Net: We retain convolution and multiscale representation from the decoder branch of the U-Net but remove all non-linearities, batch-normalization, and additive biases; additionally, we remove a convolution at each spatial resolution level and halve the number of convolutional channels for all convolutions. Grey boxes indicate the multiscale sparse code $\valpha = (\valpha_0, \ldots, \valpha_4)$ that the dictionary takes as input. Dashed boxes indicate the position that each $\valpha_i$ feed into the dictionary. (\textbf{c}): The proposed  as a computational graph that uses multiscale dictionaries $\encdict$, $\adjencdict$, and $\decdict$; although each dictionary is linear, the computational graph is nonlinear due to the thresholding operator.
 \label{fig:unet_schematic}}   
\end{figure*}

\subsection{Convolutional sparse coding \label{sec:csc}} Our work is inspired by the convolutional sparse coding (CSC) model \cite{Bristow2013fast,Sorel2016fast,Liu2017online, Sreter2018learned,Papyan2017convolutional}, which bridges deep \glspl{cnn} and sparse representation models. Concretely, Papyan et al. \cite{Papyan2017convolutional} noticed that if the dictionary $\mD$ has a convolutional structure and if the sparse code $\valpha$ is assumed to be non-negative, a single iteration of \gls{ista} with~$\valpha^{[0]}$ initialized as a zero vector and step-size $\eta = 1$ is equivalent to the forward pass of a single-layer convolutional network
\begin{equation}
\valpha = \sigma(\mD^\top\vz + \vb),
\end{equation}
where $\vb$ is a vector whose components are $-\lambda$ (cf. Equation~\eqref{eq:ista-execution}). This single-layer formulation can be extended to characterize a deep \gls{cnn} of multiple layers. Specifically, the forward-pass of a deep \gls{cnn} of $L$-layers can be interpreted to approximate the sparse codes $\valpha_1, \cdots, \valpha_L$ sequentially with respect to different dictionaries $\mD_1, \cdots, \mD_L$; the back-propagation pass is interpreted as an update to these dictionaries~$\{\mD_i\}_{i=1}^L$ in a task-driven way.

\subsection{CNNs for solving inverse problems}

Deep \glspl{cnn} achieve state-of-the-art performance in many image processing tasks \cite{Baguer2020computed, Leuschner2021quantitative, Huang2021deep, Dong2019denoising}. In particular, the U-Net \cite{Ronneberger2015unet} and its variants \cite{Falk2019unet, Zhou2020unetplusplus, Etmann2020iunets} are among the most extensively used \gls{cnn} architectures in solving image-to-image learning tasks. U-Nets represent images via multiscale features computed from measurements using an encoding (or downsampling) branch and a synthesized into an estimated image in a decoding (or upsampling) branch (Figure~\ref{fig:unet_schematic}\textbf{a}). In the downsampling branch, the spatial resolutions of feature maps are reduced while the number of feature maps is increased; in the decoding branch, these features are recombined with previous high-resolution features via channel concatenation (``skip connections'') and convolution. Heuristically, low-resolution feature maps of a U-Net capture large-scale image properties, whereas the high-resolution feature maps capture more fine-grained image properties \cite{Etmann2020iunets}. In a related line of work, Ye et al. \cite{Ye2018deep, Han2018framing, Ye2019understanding} proposed to use the framelets formalism \cite{Yin2017tale} to study aspects of U-Net-like encoder-decoder \glspl{cnn}. A key observation they make is that a U-Net model is closely related to convolutional framelets whose frame basis selection depends non-linearly on input data.

\section{CSC with multiscale dictionaries \label{sec:method}}
The structure of a convolutional dictionary is crucial to a \gls{csc} model since the dictionary atoms characterize the signals that can be represented sparsely. In the existing formulation of CSC, atoms of a convolutional dictionary have a single scale, in the sense that they all share the same spatial shape. However, many image classes and imaging artifacts exhibit structured correlations over multiple scales. To exploit these correlations in imaging inverse problems, we construct multiscale convolutional dictionaries.

Our construction is based on the U-Net reviewed in Section \ref{sec:background}. Indeed, the tremendous success of U-Nets has in part been attributed to their ability to represent images at multiple scales \cite{Zhou2020unetplusplus, Ye2019understanding}, which is achieved by using up- and downsampling operations together with skip connections as in Figure~\ref{fig:unet_schematic}\textbf{a}. Another property of the U-Net is its shared parameters across scales: Low-resolution features (the grey boxes at the bottom of Figure~\ref{fig:unet_schematic}\textbf{a}) and high-resolution features (the grey boxes at the top of Figure~\ref{fig:unet_schematic}\textbf{a}) undergo an overlapping synthesizing path parameterized by shared weights. This weight-sharing strategy has not been employed by existing proposals for multiscale CSC dictionaries \cite{Li2018video, Chang2018unsupervised}. In what follows, we describe the construction process of a \textit{linear} dictionary inspired by and closely following the standard U-Net. 

\subsection{Encoder--decoder dictionaries} We denote the encoding branch of the U-Net by $f_{\mathrm{enc}}( \cdot, \vthetaenc): \RR^d \to \RR^N$ with parameters $\vthetaenc$; the encoding branch maps the input $\vz \in \RR^d$ to convolutional feature maps $\valpha_{\vz} = f_{\mathrm{enc}}( \vz,  \vthetaenc) \in \RR^N$, illustrated as the dark grey boxes in Figure~\ref{fig:unet_schematic}\textbf{a}. Note that, for a U-Net, the intermediate feature map dimension $N$ (number of scalar coefficients in $\valpha$) is typically much greater than the image dimension $d$. These feature maps are then fed into the decoding branch of the U-Net either through skip connections or through the bottleneck layer. To describe this process, we write the decoding branch of the U-Net as a function $f_{\mathrm{dec}}( \cdot, \vthetadec): \RR^N \to \RR^d$ with parameters $\vthetadec$. That is, the function $f_{\mathrm{dec}}( \cdot, \vthetadec)$ takes the convolutional feature maps produced by the encoding branch and transforms them to produce the model output. We can thus write the output produced by a U-Net as
\[ \widehat{\vz} \coloneqq f_{\mathrm{dec}} (\valpha_{\vz}, \vthetadec ) = f_{\mathrm{dec}} \Big (f_{\mathrm{enc}}(\vz,  \vthetaenc), \vthetadec \Big ).\]

We now focus on the image synthesis process of the U-Net, described by the decoding function $f_{\mathrm{dec}}( \cdot, \vthetadec)$. This function synthesizes convolutional feature maps at different spatial scales through skip connections and upsampling. As such, the decoding branch of the U-Net approximates an image $\vgt \in \RR^d$ using multiscale feature maps $\valpha_{\vz} \in \RR^N$ of a much higher dimension, so that $\vgt \approx f_{\mathrm{dec}} (\valpha_{\vz}, \vthetadec )$. Conceptually, this representation is similar to the sparse and overcomplete representation in a dictionary, except that the U-Net decoder is non-linear.

To construct a multiscale dictionary, we thus consider a stripped-down version of the image synthesis process of U-Net by removing all non-linearities, batch normalization, and additive biases from the function $f_{\mathrm{dec}} (\cdot, \vthetadec)$, as shown in Figure~\ref{fig:unet_schematic}\textbf{b}; to further simplify the architecture, at each spatial scale, we additionally remove a convolution and halve the number of convolutional channels for all convolutions.  The resulting function is then simply a \emph{linear} transformation 
\begin{equation} \label{eq:dictionary-synthesis}
\valpha \coloneqq (\valpha_0, \ldots, \valpha_4) \mapsto \decdict \valpha,
\end{equation}
where $\valpha_0, \ldots, \valpha_4$ are sparse code having different resolutions (visualized as the grey boxes in Figure 1\textbf{b}). This dictionary shares the essential ingredients of convolution, multiscale representation, and skip connections with the U-Net decoding branch and therefore we refer to it as the decoder dictionary. A precise definition of the decoder dictionary $\decdict$ through convolution and upsampling is provided in Appendix \ref{appendix:decoder_dictionary_description}.

\subsection{The dictionary-based sparsity prior}

With a given decoder dictionary $\decdict$ to describe the image synthesis process, we next consider how to infer an associated sparse code $\valpha$, so that $\decdict \valpha$ is a good approximation of the image we wish to model. In a supervised learning setting where the input image $\vz$ is given, it is natural to interpret $\valpha$ as an encoded representation of $\vz$. Since the encoding must produce a coefficient vector whose structure is compatible with $\valpha$, we endow an encoder dictionary $\encdict \in \RR^{d \times N}$ with the same structure of $\decdict$ albeit with a different set of atoms. This setup is analogous to U-Net's encoding and decoding branches: the encoder and decoder dictionaries $\encdict$ and $\decdict$ are employed to process input signals and produce output signals, respectively. The sparse code $\valpha_{\vz}$ induced by an input $\vz$ and the encoder dictionary $\encdict$ then facilitate the subsequent task for approximating the ground-truth image $\vx$:
\begin{equation} \label{eq:enc_dec_dictionaries}
     \vz \xrightarrow{\hspace{1mm} \textrm{ Sparse coding with } \encdict \hspace{1mm}} \valpha_{\vz}  \xrightarrow{\hspace{1mm} \textrm{ Synthesis with } \decdict \hspace{1mm}} \vapprox \coloneqq \decdict \valpha_{\vz}.
\end{equation}

In what follows, we derive a supervised learning method that turns each $\vz$ into a prediction $\vapprox$ using encoder and decoder dictionaries.

\subsection{The task-driven dictionary learning objective \label{subsec:the-task-driven-objective}} Under the task-driven framework introduced in Section \ref{sec:background}, we formulate a supervised learning problem via sparse coding and dictionary learning. We consider the following minimization problem over a dataset of $M$ input-target pairs $\{(\vz_i, \vgt_i) \}_{i = 1}^M$:
\begin{equation} \label{eq:ista_unet_objective}
\begin{aligned}
\minimize_{\{\encdict,\decdict\}, \vlambda > 0  } \quad & \frac{1}{2M} \sum_{i = 1}^M \| \decdict \valpha_{\vz_i} - \vgt_i  \|_2^2  \\
\text{where} \quad & \valpha_{\vz_i} \coloneqq \ista_{K} (\vz_i; \encdict, \vlambda ). 
\end{aligned}
\end{equation}
The objective in \eqref{eq:ista_unet_objective} penalizes the discrepancy between the ground-truth signal $\vgt$ and the model prediction $\decdict \valpha_{\vz}$, where the latter is a signal synthesized from a sparse code $\valpha_{\vz}$ via the decoder dictionary $\decdict$; the code $\valpha_{\vz}$ is a sparse representation of the input image $\vz$ with respect to the encoder dictionary $\encdict$ by unrolling a fixed number $K$ of \gls{ista} iterations. The sparsity-controlling parameter $\vlambda$ is multi-dimensional, weighting codes component-wise. The intuition behind this choice is that the different convolutional features, especially those at different scales, should be thresholded differently. The sparse code $\valpha$, illustrated as the grey boxes in Figure~\ref{fig:unet_schematic}\textbf{b}, is a collection of multi-dimensional tensors, each corresponds a spatial scale. 

The task driven objective \eqref{eq:ista_unet_objective} defines a computational graph that transforms an input image $\vz$ into a prediction $\decdict \valpha_{\vz}$. We term this computational graph MUSC, since it involves \textbf{m}ultiscale \textbf{U}-Net-like \textbf{s}parse \textbf{c}oding. We note the MUSC is an instance of \emph{optimization-driven networks} \cite{Lecouat2020flexible} derived by unrolling an optimization algorithm. It incorporates two modules with meaningful objectives, one implementing sparse coding and the other dictionary-based synthesis. This composition is arguably conceptually more interpretable than end-to-end layerwise composition of deep networks. 

While a traditional compressed sensing approach uses \emph{a single dictionary} for reconstruction, our approach uses \emph{two dictionaries} $\encdict$ and $\decdict$ in the task-driven learning objective \eqref{eq:ista_unet_objective}. This discrepancy is due to different assumptions in measurement-to-image reconstruction (the compressed sensing approach) and image-to-image reconstruction (our approach).  Consider an inverse problem with a forward operator $\mA$, a unknown ground-truth signal $\vgt$, and measurements $\vy \coloneqq \mA \vgt$; in CT reconstruction, $\mA$ is the Radon transform and $\vy$ is the measured sinogram. The compressed sensing approach estimates $\vgt$ as $\mD \valpha^\ast$ for some dictionary $\mD$, where 
\begin{equation} \label{eq:classical-cs}
\valpha^\ast = \argmin_{\valpha} \norm{\mA \mD \valpha - \vy}_2 + \lambda \norm{\valpha}_1
\end{equation}
is the inferred sparse code based on the dictionary $\mD$. Note that   \eqref{eq:classical-cs} and the synthesis $\mD \valpha^*$ require only a single dictionary $\mD$. However, this approach assumes that we know the measurements $\vy$ and the forward operator $\mA$.

If we were to apply a single dictionary $\mD \coloneqq \encdict = \decdict$ in our image-to-image learning approach in \eqref{eq:ista_unet_objective}, we would find a sparse code $\valpha$ such that $\mD \valpha \approx {\vgt}$ and $\mD \valpha \approx {\mA^+ \mA \vgt}$. This is difficult when $\mA^+ \mA$ significantly differs from the identity operator as in the case of highly ill-posed problems. On the other hand, using two dictionaries $\encdict$ and $\decdict$ in \eqref{eq:ista_unet_objective} requires finding a sparse code $\valpha$ such that $\decdict \valpha \approx {\vgt}$ and $\encdict \valpha \approx {\mA^+ \mA \vgt}$, a formulation that is more flexible when $\mA^+ \mA$ substantially differs from the identity. Experiments in  Section~\ref{subsec:ablation-study} confirm that allowing $\encdict \neq \decdict$ yields better performance. We note that our approach is morally related to setting $\encdict = \mA \mD$ in \eqref{eq:classical-cs}, but since we do not know $\mA$ we have to learn $\encdict$ from samples together with $\decdict$. Such a learned encoder dictionary captures information about $\mA$, entangled with information about the data distribution.

\subsection{Relaxation on dictionaries \label{subsec:relaxation_regularization}}

We now describe computational techniques that stabilize the gradient-descent-based dictionary learning of MUSC. Following earlier work \cite{Gregor2010learning, Dror2019rethinking, Lecouat2020flexible, Lecouat2020fully,Zarka2020Deep}, we untie the encoder dictionary from its adjoint during dictionary update. That is, we replace the execution in \eqref{eq:ista-execution} by
\begin{equation} \label{eq:relax-ista-execution}
\widetilde{\Scal}(\valpha, \vz; \encdict, \adjencdict, \vlambda) \coloneqq \sigma (\valpha + \eta \adjencdict^\top(\vz - \encdict \valpha) - \eta \vlambda),
\end{equation}
where the dictionary $\adjencdict$ is initialized to be identical to $\encdict$ but is allowed to evolve independently during training. Even though the theoretical effects of this relaxation remain unclear, the dictionary $\adjencdict$ can be interpreted as a learned preconditioner that accelerates training \cite{Lecouat2020flexible, Lecouat2020fully}; see also the investigation in \cite{Moreau2017understanding, Liu2018alista, Zarka2020Deep}. The \emph{learned ISTA} (LISTA) algorithm \cite{Gregor2010learning} corresponding to \eqref{eq:relax-ista-execution} is written as
\begin{equation} \label{eq:relax-ista}
\begin{split}
 & \lista_{K} (\vz; \encdict, \adjencdict, \vlambda ) \\
 & \small \coloneqq \underbrace{\Big ( \widetilde{\Scal} ( \cdot, \vz; \encdict, \adjencdict, \vlambda_K) \circ\cdots\circ \widetilde{\Scal} ( \cdot, \vz; \encdict, \adjencdict, \vlambda_1) \Big )}_{K \text{ times}} (\valpha^{[0]}),
\end{split}
\end{equation}
where $\vlambda_1, \ldots, \vlambda_K$ are the soft-thresholding parameters for each ISTA execution. Note that, in \eqref{eq:relax-ista}, the soft-thresholding parameters $\{\vlambda_i\}_{i=1}^K$ depend on the execution step. As shown in \cite{Zarka2020Deep}, incorporating step-dependent soft-thresholding parameters can be beneficial. While \cite{Zarka2020Deep} uses a homotopy continuation strategy to adjust these parameters we treat them as learnable parameters for simplicity. Taking these considerations into account, we define a new regression loss:
\begin{equation} \label{eq:ista_unet_relax_objective}
\begin{aligned}
& \mathcal{L}(\encdict, \decdict, \vlambda) \coloneqq \frac{1}{2M} \sum_{i = 1}^M \| \decdict \valpha_{\vz_i} - \vgt_i  \|_2^2, \\
& \text{where} \quad \valpha_{\vz_i} =  \lista_{K} (\vz_i; \encdict, \adjencdict, \vlambda ). \\
\end{aligned}
\end{equation}
Unless mentioned otherwise, we use the loss \eqref{eq:ista_unet_relax_objective} to train MUSC throughout our paper. In Section \ref{subsec:ablation-study}, we compare the performance of trained model using \eqref{eq:ista_unet_relax_objective} and \eqref{eq:ista_unet_objective}.

\subsection{Training the MUSC}

Training the MUSC entails the following three steps:
  
\begin{enumerate}
    \item \textbf{Dictionary initialization:} We randomly initialize the dictionary $\encdict$ and initialize $\decdict$, and $\adjencdict$ as identical copies of $\encdict$.
    \item \textbf{Model forward pass:} For each input image $\vz_i$, we evaluate the model prediction $\decdict \valpha_{\vz_i}$ as in Equation \eqref{eq:ista_unet_relax_objective}. For ISTA executions, we initialize all sparse code $\valpha_{\vz}^{[0]}$ as a collection of all-zero tensors; the ISTA step-size parameter $\eta$ is initialized as the inverse of the dominant eigenvalue of the matrix $\encdict^\top \encdict$, which can be approximated using by power iteration (Appendix \ref{appendix:power_iter}).
    \item \textbf{Task-driven dictionary learning:} For a mini-batch of input-target pairs, solve the optimization problem in \eqref{eq:ista_unet_relax_objective} with gradient descent.
\end{enumerate}

\begin{figure*}[ht]
\centering
\includegraphics[width= \textwidth]{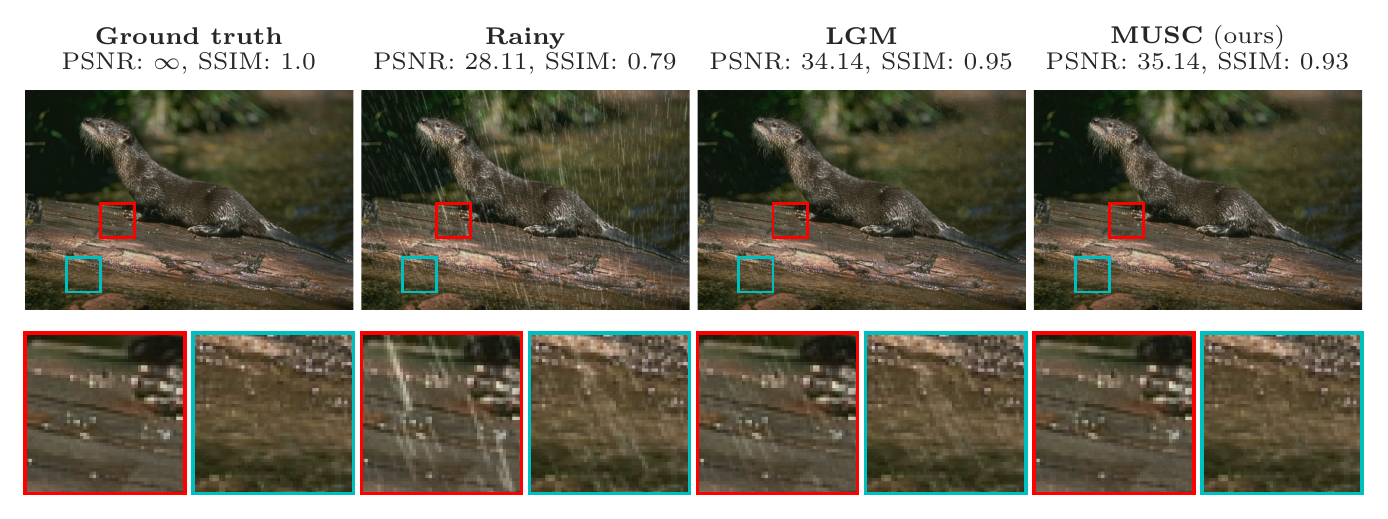} 
\caption{\textbf{Reconstructions of a test sample from the Rain12 dataset.}  \label{fig:derain_comparison}}
\end{figure*}

\begin{table*}[t]
\normalsize
\centering
\normalsize
\begin{tabular}{ccccc}\toprule
 & \multicolumn{2}{c}{Rain12} & \multicolumn{2}{c}{Rain100L} \\ 
 \cmidrule(r){2-3}  \cmidrule(r){4-5}
& PSNR & SSIM & PSNR & SSIM \\ \midrule
LP \cite{Li2016rain}  & 32.02 & 0.91 & 29.11 & 0.88 \\
DDN \cite{Fu2017removing}  &  31.78 & 0.90 & 32.16 & 0.94 \\
DSC \cite{Luo2015removing}  & 30.02 & 0.87 & 24.16 & 0.87 \\
JORDER \cite{Yang2020deep} & 33.92 & 0.95 &  36.61 & 0.97 \\
PReNet \cite{Ren2019progressive} & \cellcolor{grey}~~36.69~~ & \bf 0.96 & \cellcolor{grey}~~37.10~~ & \bf 0.98 \\
LGM (single-scale CSC; \cite{Khatib2021learned}) & 35.46 & 0.95 & 34.07 & 0.96 \\ 
MUSC (multiscale CSC; ours)  & \bf ~~36.77~~  & \bf ~~0.96~~  & \bf~~37.25~~ & \bf ~~0.98~~  \\ 
\bottomrule
\end{tabular}    
\vspace*{1mm}
\caption{\textbf{Performance on the deraining test set.} Boldface indicates the best performance; second-best results are highlighted in grey. All results are collected from \cite{Ren2019progressive} and \cite{Khatib2021learned} except MUSC. \label{tab:deraining-results}}
\end{table*}

\section{Experiments \label{sec:experiments}}

We report the performance of MUSC on deraining, CT reconstruction, and MRI reconstruction tasks. The motivations for choosing these tasks are as follows. First, we note that single-scale \gls{csc} models have recently been applied to the deraining task, achieving performance slightly worse than state-of-the-art deep networks \cite{Khatib2021learned}; we thus aim to test the capability of our multiscale approach on the same task. We additionally choose CT and MRI reconstruction tasks as there exist challenging, large-scale, and up-to-date benchmark datasets for these tasks. Two such datasets that we use are the LoDoPaB-CT \cite{Leuschner2021LoDoPabCT} and the fastMRI \cite{Zbontar2018fastMRI}. An additional strength of these two datasets is that the model evaluation process is carefully controlled: The evaluation on the challenge fold (for LoDoPaB-CT) or the test fold (for fastMRI) is restricted through an online submission portal with the ground truth hidden from the public. As a result, overfitting to these evaluation folds is difficult and quantitative comparisons are transparent.

Throughout this section, we use the MUSC architecture whose encoder and decoder dictionaries are displayed in Figure~\ref{fig:unet_schematic}\textbf{b} and mathematically defined in Appendix \ref{appendix:decoder_dictionary_description}. Hyper-parameter choices for the experiments are provided in Appendix \ref{appendix:experiment_setup}. For each task, we use well-known \gls{cnn} models as baselines. We note that, for the CT and MRI reconstruction tasks, there are two major approaches to employ \glspl{cnn}. In the first, \emph{model-based} approach, one applies neural networks on raw measurement data (sinogram data in CT and k-space data in MRI) by embedding a task-dependent forward operator (the Radon transform for CT and the Fourier transform for MRI) into multiple layers or iterations of the network. Learning methods of this approach can be highly performant at the cost of being computationally expensive, especially during training, since one needs to apply the forward operator (and the adjoint of its derivative) repeatedly \cite{Leuschner2021quantitative}. In the second, \emph{model-free} approach, the (pseudoinverse of the) forward operator is used at most \emph{once} during data preprocessing and is never used during subsequent supervised training. These preprocessed images contain imaging artifacts. During supervised learning, one applies a \gls{cnn} directly on these preprocessed images. The proposed MUSC is in this sense a model-free approach and we compare it to model-free baselines. We note that in this case one does not need to know the forward operator at all. The leading model-free baseline \gls{cnn} methods in this approach are typically U-Net variants tuned to the task at hand. For a more thorough comparison, we also implemented the original U-Net architecture proposed in \cite{Ronneberger2015unet} (schematically illustrated in Figure~\ref{fig:unet_schematic}\textbf{a}) in these tasks as additional baselines. 

While model-free approaches perform somewhat worse than model-based ones, our purpose here is to show that a general-purpose multiscale convolutional model can perform as well as convolutional neural networks \textit{ceteris paribus}, rather than to propose state-of-the-art reconstruction algorithms for specific problems. This general-purpose approach further allows us to tackle structured denoising problems such as deraining where the forward operator is simply the identity.

\begin{figure*}[ht]
\centering
\includegraphics[width= \textwidth]{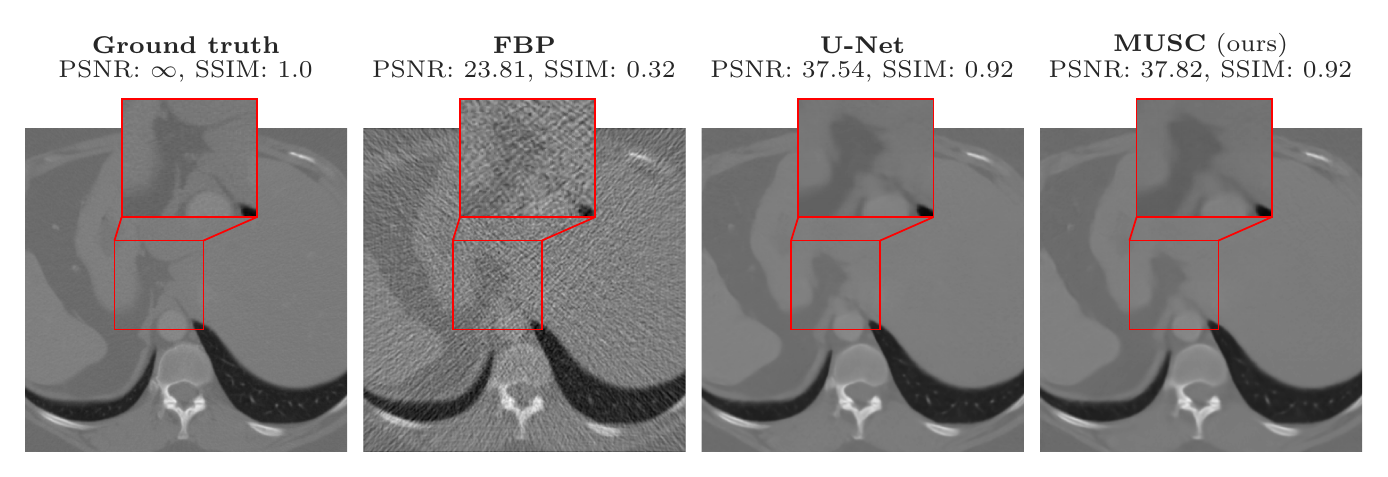} 
\caption{\textbf{Reconstructions of a test sample from the LoDoPaB-CT dataset.}  \label{fig:ct_comparison}}
\end{figure*}

\begin{table*}[t]
\normalsize
    \centering
    \begin{tabular}{lccccccc} \toprule
             & Number of parameters & PSNR    & PSNR-FR    & SSIM & SSIM-FR \\\toprule
        FBP  & -  & 30.19 $\pm$ 2.55 & 34.46 $\pm$ 2.18 & 0.727 $\pm$ 0.127 & 0.836 $\pm$ 0.085   \\
        TV   & - & 33.36 $\pm$ 2.74 & 37.63 $\pm$ 2.70 & 0.830 $\pm$ 0.121 & 0.903 $\pm$ 0.082  \\
        CINN & 6.43M & 35.54 $\pm$ 3.51 & 39.81 $\pm$ 3.48 & 0.854 $\pm$ 0.122 & 0.919 $\pm$ 0.081 \\
        U-Net++ & 9.17M  & 35.37 $\pm$ 3.36 & 39.64 $\pm$ 3.40 & 0.861 $\pm$ 0.119 & 0.923 $\pm$ 0.080 \\
        MS-D-CNN & 181.31K & 35.85 $\pm$ 3.60 & 40.12 $\pm$ 3.56 & 0.858 $\pm$ 0.122 & 0.921 $\pm$ 0.082 \\
        U-Net & 31.04M & 35.87 $\pm$ 3.59 & 40.14 $\pm$ 3.57 & 0.859 $\pm$ 0.121 &	0.922 $\pm$ 0.081 \\
        LoDoPaB U-Net &  613.32K & \cellcolor{grey}~~36.00 $\pm$ 3.63~~ & \cellcolor{grey}~~40.28 $\pm$ 3.59~~ & \cellcolor{grey}~~0.862 $\pm$ 0.119~~ & \cellcolor{grey}~~0.923 $\pm$ 0.079~~ \\
        MUSC (ours) & 13.87M  & \bf~~{36.08} $\pm$ 3.68~~ & \bf~~{40.35} $\pm$ {3.64}~~ & \bf~~{0.863} $\pm$ {0.119}~~ & \bf~~{0.924} $\pm$ \num[math-rm=\bf]{0.080}~~ \\  \bottomrule
    \end{tabular} 
	\vspace*{1mm}
    \caption{\textbf{Performance on the LoDoPaB-CT challenge set.} All values are taken from the official challenge leaderboard.}
    \label{tab:ct_challenge_results}
\end{table*}

\subsection{Deraining \label{subsec:de_raining}}
Image deraining aims to remove rain streaks from an image. Formally, a rainy image $\vz$ is expressed as $\vz = \vgt + \vs$, where $\vgt$ is a clean image and $\vs$ is the rain streaks component. The goal is to reconstruct the clean image $\vgt$ based on the rainy image $\vz$. Recently, single-scale \gls{csc} models have been applied to the draining task \cite{Khatib2021learned}. Despite theoretical progress, these single-scale \gls{csc} models still fall short competing with leading deep learning models, as remarked in \cite{Khatib2021learned}. In this section, we demonstrate that our multiscale \gls{csc} model closes this performance gap.

Throughout this subsection, we follow the experiment setup of \cite{Khatib2021learned}. We use 200 clean and rainy image pairs as the training dataset. A rainy image is created by adding synthesized rain streaks to its clean version. We use two test sets, Rain12 \cite{Li2016rain} and Rain100L \cite{Yang2020deep}, to benchmark our results.  Similar to \cite{Khatib2021learned}, we train our model to restore rain streaks based on rainy images; a derained image is then the difference between a rainy image and the restored rain streaks. To be consistent with \cite{Khatib2021learned, Yang2020deep, Ren2019progressive}, the evaluation result is calculated after transforming the image into the luma component in the YCbCr domain using the software provided by \cite{Ren2019progressive}. Additional details of the experiment are provided in Appendix \ref{appendix:experiment_setup}. 

We report the reconstruction performance in Table \ref{tab:deraining-results} and visualize the reconstruction results in Figure~\ref{fig:derain_comparison}. Our multiscale convolutional dictionary approach matches or outperforms baseline methods. Notably, it improves upon the LGM method (the single-scale CSC approach of \cite{Khatib2021learned}) by a non-trivial margin. 

\begin{figure*}[!htbp]
\centering
\includegraphics[width= \textwidth]{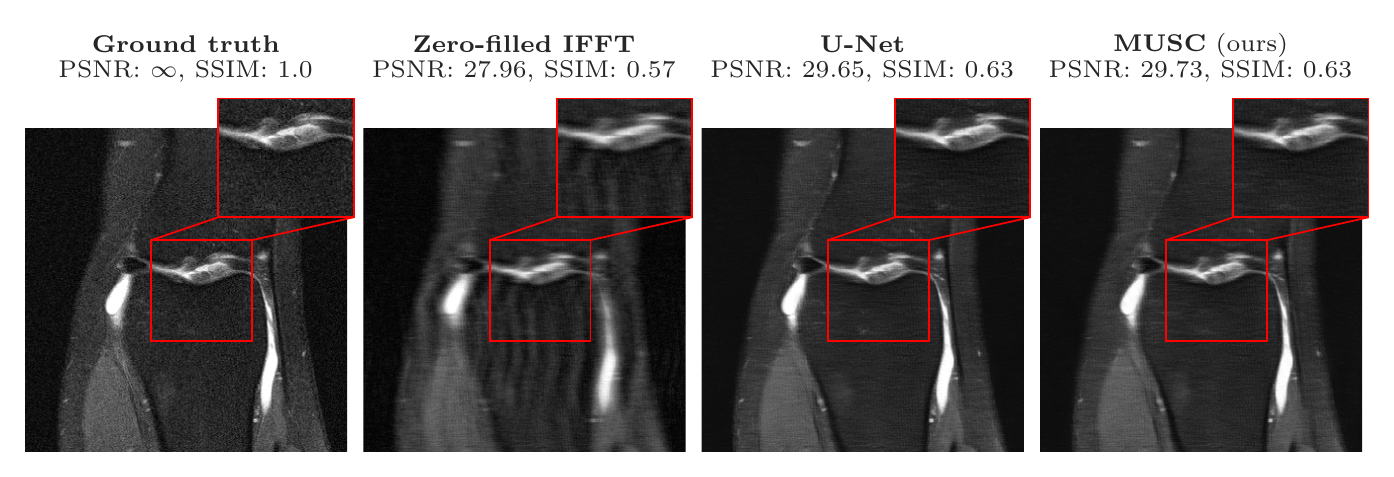} 
\caption{\textbf{Reconstructions of a test sample from the fastMRI single-coil knee dataset.}  \label{fig:fastmri_comparison}}
\end{figure*}

\begin{table*}[!htbp]
\normalsize
\centering
\begin{tabular}{cccccccc}\toprule
  & Number of parameters & \multicolumn{2}{c}{NMSE} & \multicolumn{2}{c}{PSNR   } & \multicolumn{2}{c}{SSIM} \\ 
 \cmidrule(r){3-4} \cmidrule(r){5-6} \cmidrule(r){7-8} 
 & & PD & PDFS & PD & PDFS & PD & PDFS \\ \midrule
 TV & - & 0.0287 &  0.0900    & 31.4  & 27.7  & 0.645 & 0.494    \\
 U-Net & 31.04M & 0.0161  & 0.0538    & 33.8  &  29.9  &  0.809  &   \cellcolor{grey}~~0.631~~   \\
 fastMRI U-Net-256 & 214.16M & \bf  0.0154 &   \bf  0.0525 & \bf  34.0 & \bf  30.0 & \bf 0.815 &    \bf 0.636 \\
 MUSC (ours) &  13.87M &  \cellcolor{grey}~~0.0156~~  &  \cellcolor{grey}~~0.0526~~ &   \bf  34.0 & \bf~~30.0~~  &  \cellcolor{grey}~~0.811~~ &  \cellcolor{grey}~~0.631~~      \\
\bottomrule
\end{tabular}
\vspace*{1mm}
\caption{ \textbf{Performance on fastMRI single-coil knee validation data}. Results are collected from \cite{Zbontar2018fastMRI} except U-Net and MUSC. The fastMRI U-Net-32 model refers to a U-Net variant defined in \cite{Zbontar2018fastMRI} whose output after the first convolution has 32 channels. Other models are defined similarly. PDFS and PD correspond to two MRI acquisition protocols with fat suppression (PDFS) and without fat suppression (PD) \cite{Zbontar2018fastMRI}.}
\label{tab:mri_validation_results}
\end{table*}

\begin{table*}[!htbp]
\normalsize
\centering
\begin{tabular}{cccccccc}\toprule
  & Number of parameters & \multicolumn{2}{c}{NMSE} & \multicolumn{2}{c}{PSNR} & \multicolumn{2}{c}{SSIM} \\ 
 \cmidrule(r){3-4} \cmidrule(r){5-6} \cmidrule(r){7-8} 
 & & PD & PDFS & PD & PDFS & PD & PDFS \\ \midrule
 Zero-filled IFFT & - & 0.0266 & 0.0597 & 32.0 & 29.2 & 0.7765 & 0.6045 \\
 TV & - & 0.0221  & 0.0716  & 33.1 & 28.5 & 0.7036  & 0.5096  \\
 U-Net & 31.04M &  0.0130 & 0.0430  & 35.3  & 30.6   & \num0.8513   & 0.6626  \\
 fastMRI U-Net-256 & 214.16M & \bf 0.0115  & \bf 0.0414 & \bf 35.9 & \bf 30.8  & \bf 0.8618 & \bf 0.6680 \\
 MUSC (ours) & 13.87M & \cellcolor{grey}~~0.0126~~   & \cellcolor{grey}~~0.0421~~  &  \cellcolor{grey}~~35.5~~  & \cellcolor{grey}~~30.7~~  & \cellcolor{grey}~~0.8537~~  & \cellcolor{grey}~~0.6633~~  \\
\bottomrule
\end{tabular} 
\vspace*{1mm}
\caption{\textbf{Performance on single-coil knee test data}. Results are collected from the fastMRI public leaderboard.}
\label{tab:mri_test_results}
\end{table*}

\subsection{CT reconstruction \label{subsec:ct_recon}}
Computed tomography (CT) aims to recover images from their sparse-view sinograms. We use the LoDoPaB-CT dataset \cite{Leuschner2021LoDoPabCT} to benchmark our results. This dataset contains more than 40000 pairs of human chest CT images and their simulated low photon count measurements. The ground truth images of this dataset are human chest CT scans corresponding to the LIDC/IDRI dataset \cite{Armato2011}, cropped to 362 × 362 pixels. The low-dose projections are simulated using the default setting of \cite{Leuschner2021LoDoPabCT}.

To train our MUSC, we use the default dataset split as recommended in \cite{Leuschner2021LoDoPabCT}: The dataset is divided into 35820 training samples, 3522 validation samples, 3553 test samples, and 3678 challenge samples. Here, the ground-truth samples from the challenge dataset are hidden from the public; the evaluation on this fold is performed through the online submission system of the LoDoPaB-CT challenge\footnote{\url{https://lodopab.grand-challenge.org/challenge/}}.

We compare the reconstruction performance of MUSCs with five modern \gls{cnn} baselines, namely CINN \cite{Denker2020conditional}, U-Net++ \cite{Zhou2018unet}, MS-D-CNN \cite{Pelt2018improving}, U-Net \cite{Ronneberger2015unet}, and LoDoPaB U-Net \cite{Leuschner2021LoDoPabCT}; the LoDoPaB U-Net refers to a modified U-Net architecture tailored to the LoDoPaB-CT task.  Figure~\ref{fig:ct_comparison} shows the reconstruction results of a test sample. In Table~\ref{tab:ct_challenge_results}, we quantitatively compare MUSC with two classic methods (FBP and TV) together with five \gls{cnn} baseline methods mentioned above. As shown in Table~\ref{tab:ct_challenge_results}, MUSC outperforms all baselines. The metrics PSNR and PSNR-FR are taken from \cite{Leuschner2021quantitative}: For a ground-truth signal $\vx^{\dagger}$ and its approximation $\widehat{\vx}$, we define
\[ \operatorname{PSNR}\left(\widehat{\vx},\vx^{\dagger}\right) \coloneqq 10 \log _{10}\left(\frac{\max (\vx^{\dagger}) - \min (\vx^{\dagger}) }{\operatorname{MSE}\left(\widehat{\vx},\vx^{\dagger}\right)}\right), \]
\[ \operatorname{PSNR-FR}\left(\widehat{\vx}, \vx^{\dagger} \right) \coloneqq 10 \log _{10}\left(\frac{1}{\operatorname{MSE}\left(\widehat{\vx},\vx^{\dagger}\right)}\right).\]

\begin{figure*}
\includegraphics[width= \textwidth]{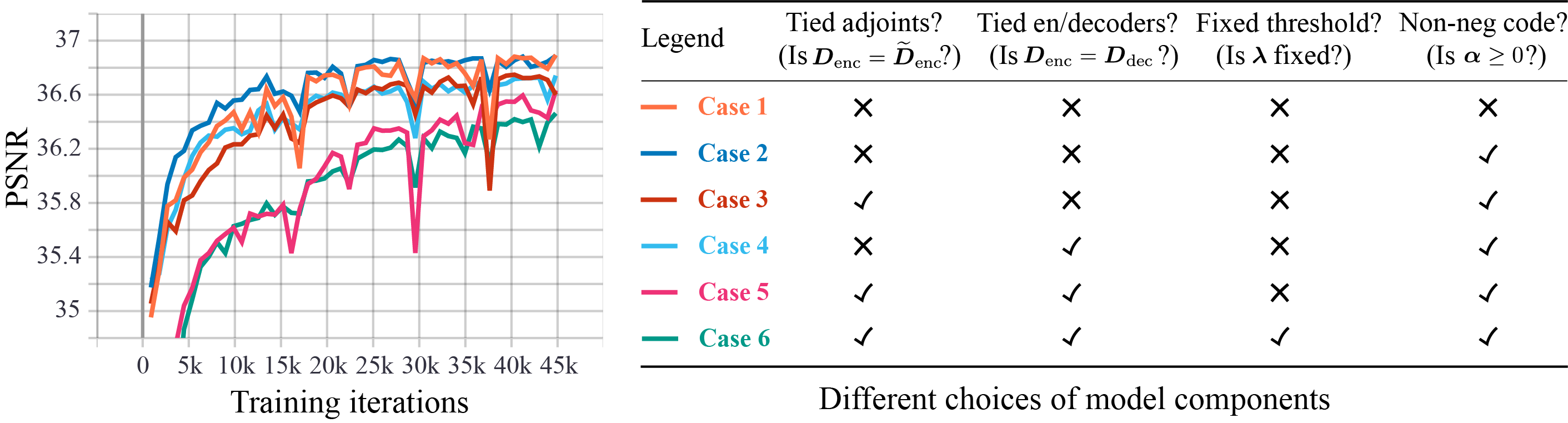}
\caption{\textbf{Ablation study on how different choices of model components affect the overall performance.} Left panel shows the PSNR (evaluated on validation samples) of six trained models as the training progresses; right panel shows the configuration of each trained model, where Case 2 corresponds to the usual setting used in other sections of this paper. For training, we used first 10\% of training samples of the LoDoPab-CT dataset; the validation samples are 50 samples in the validation fold of the dataset.}
 \label{fig:ablation}   
\end{figure*}

\begin{figure*}[!ht] 
\centering
\includegraphics[width= 0.49 \textwidth]{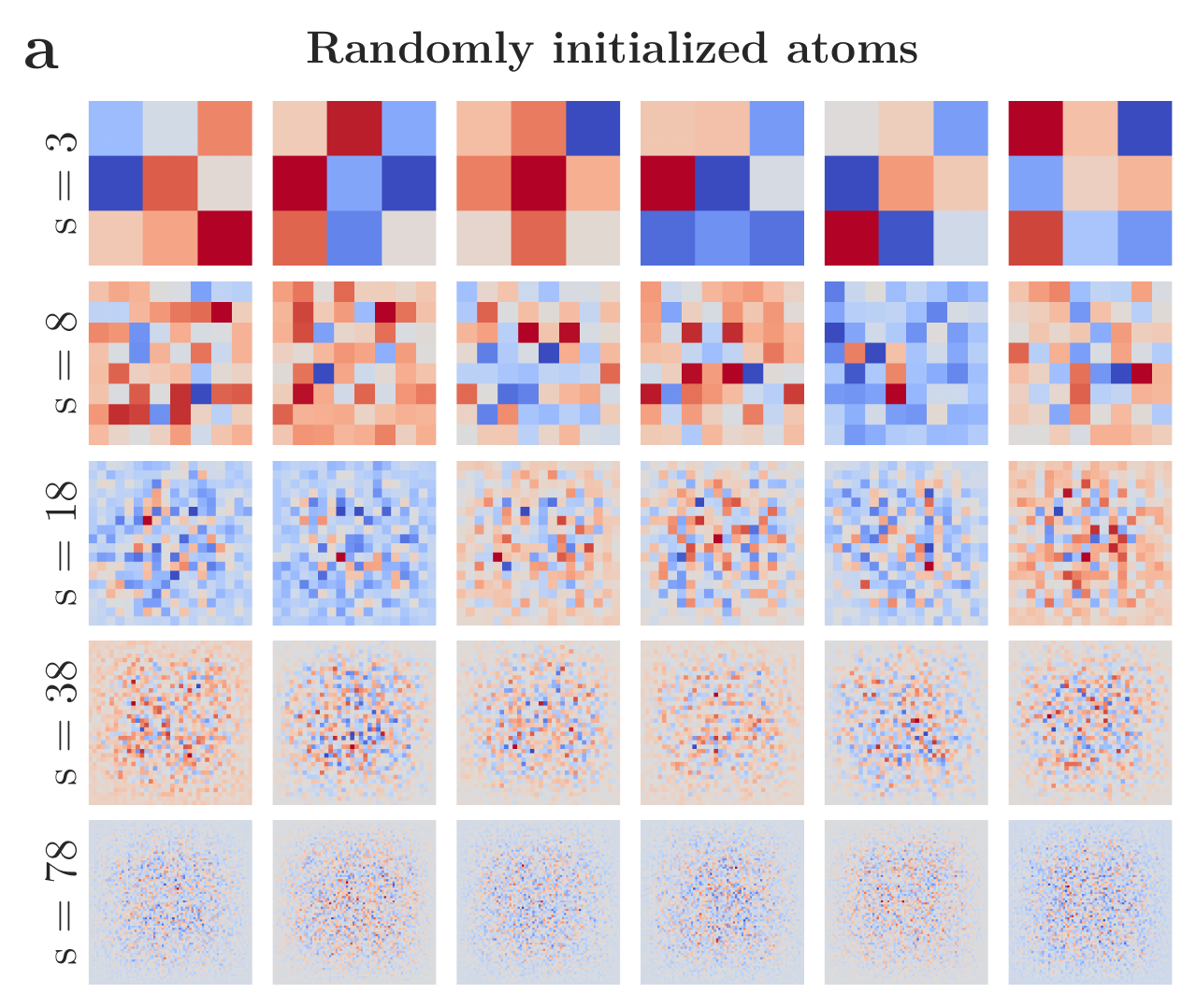} 
\includegraphics[width= 0.49 \textwidth]{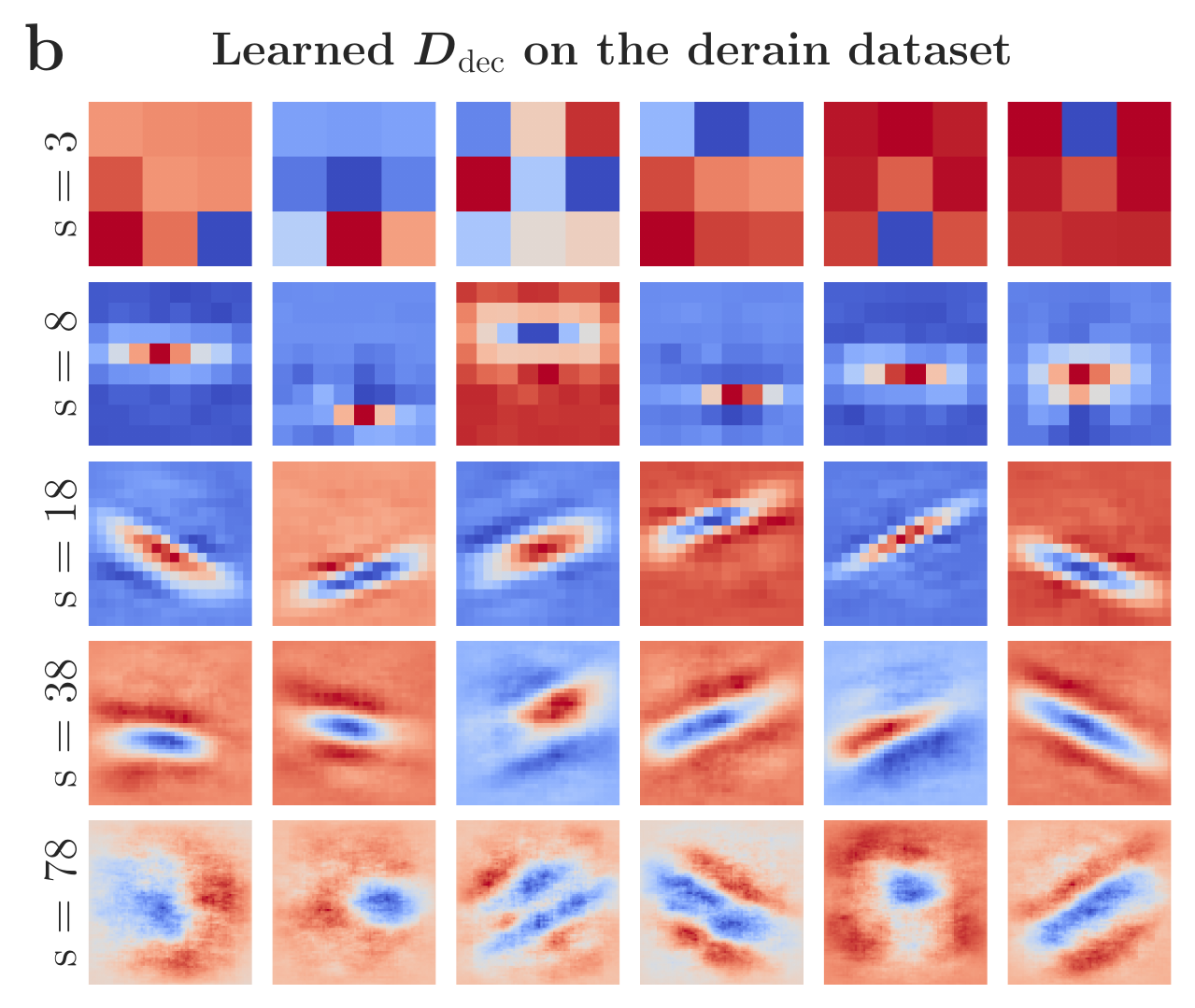} \\
\includegraphics[width= 0.49 \textwidth]{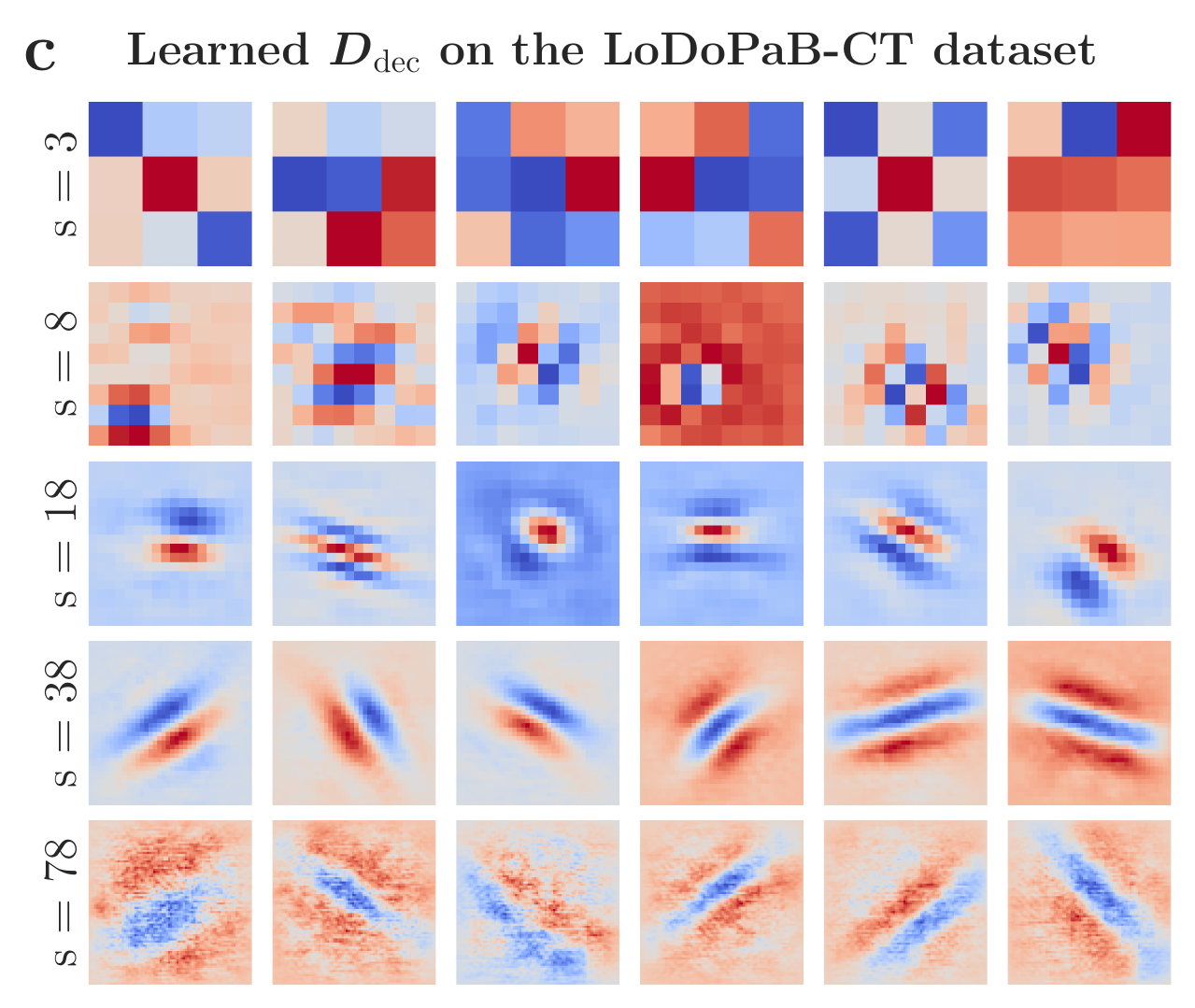}  
\includegraphics[width= 0.49 \textwidth]{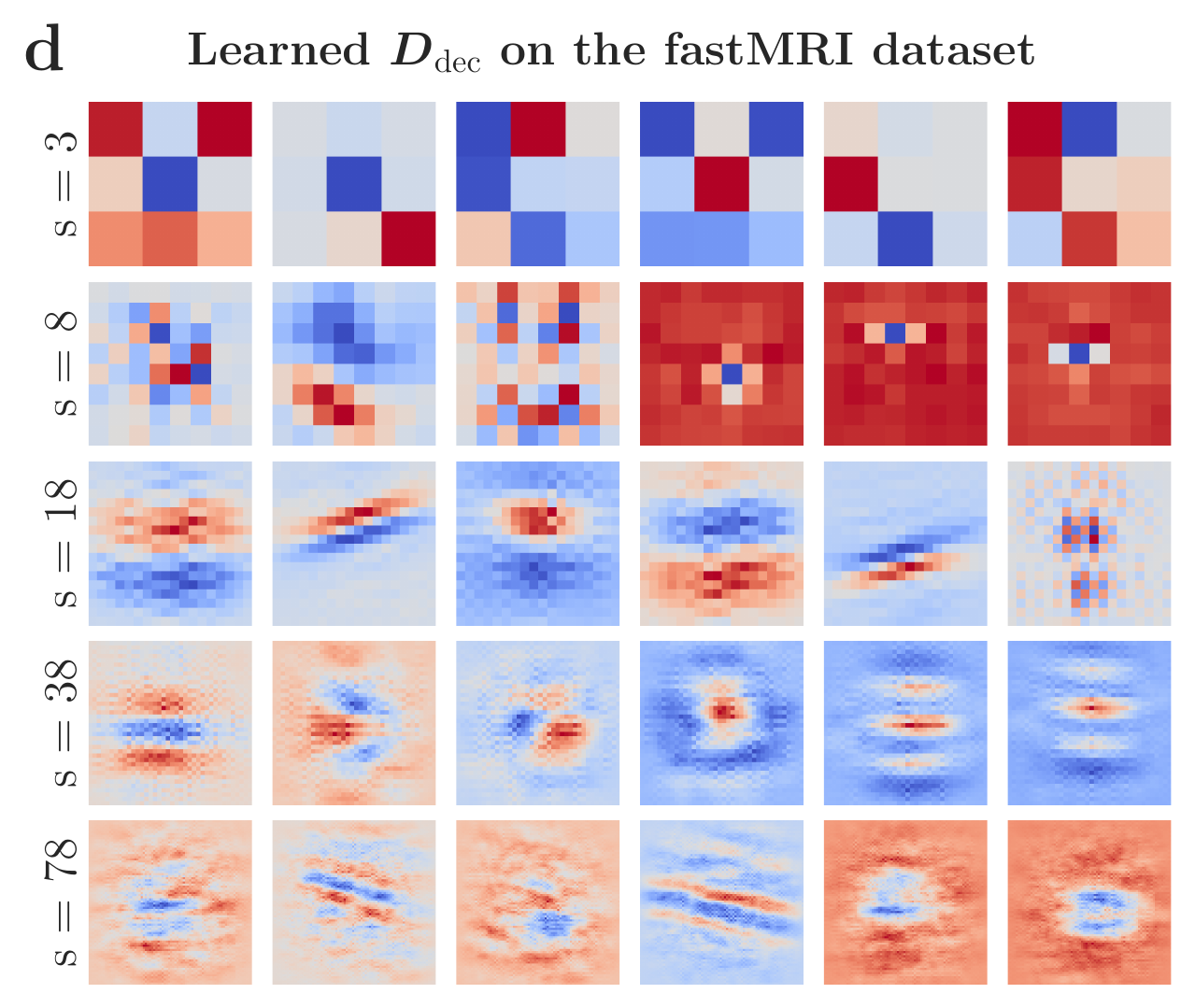} \\
\vspace*{0.5mm}
    \caption{\textbf{Atoms in a randomly initialized (panel~\textbf{a}) and learned decoder dictionary based on the derain dataset (panel~\textbf{b}), LoDoPab-CT (panel~\textbf{c}) and fastMRI (panel~\textbf{d}) dataset.} For all panels, each row corresponds to a support size (denoted by $s$) of atoms. Top rows are atoms that have a small support size; bottom rows are atoms that have a large support size. In each row, atoms are displayed in a sorted order of decreasing $\ell_2$ norm; for the visualization purpose, they are normalized into the range $[-1,1]$. \label{fig:fastmri_atoms}}
\end{figure*}

\subsection{MRI reconstruction}

We further considered the task of accelerated \gls{mri} reconstruction using the fast\gls{mri} dataset \cite{Zbontar2018fastMRI} procured by Facebook and NYU. Specifically, we used the single-coil knee dataset with a 4-fold acceleration factor. This dataset contains 973 volumes or 34742 slices in the training set, 199 volumes or 7135 slices in the validation set, and 108 volumes or 3903 slices in the test set. The ground-truth images in the test set are not provided to the public and the evaluation must be made through the fastMRI online submission system\footnote{\url{https://fastmri.org/}}. 

Following the training protocol of \cite{Zbontar2018fastMRI}, we first transformed the undersampled k-space measurements into the image space using zero-filled Inverse Fast Fourier Transform (IFFT); we use the transformed images as input to MUSC and other \gls{cnn} baselines. Consistent with previous work \cite{Zbontar2018fastMRI}, we found that U-Net variants deliver exceptional performance on validation samples (Table~\ref{tab:mri_validation_results}). Remarkably, MUSC performs on-par with U-Net variants, yielding visually indistinguishable results (Figure~\ref{fig:fastmri_comparison}). We next evaluate the U-Net and the MUSC on test samples through the fastMRI submission system. On the test data, the proposed MUSC produces results comparable to the best-performing U-Net result (fastMRI U-Net-256) provided by the fastMRI challenge organizer while having an order of magnitude fewer parameters (Table~\ref{tab:mri_test_results}).

\begin{figure*}[!ht]
    \centering
    \includegraphics[width= 0.49 \textwidth]{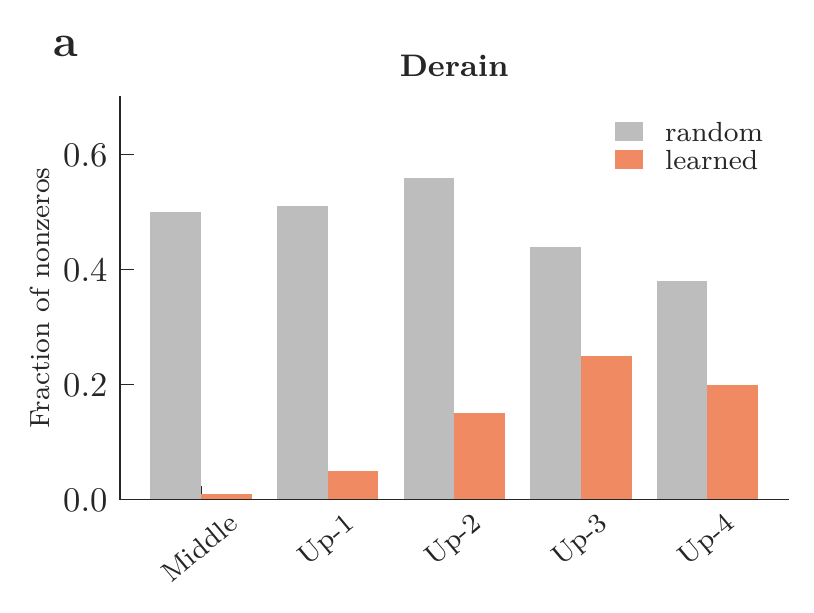}
    \includegraphics[width= 0.49 \textwidth]{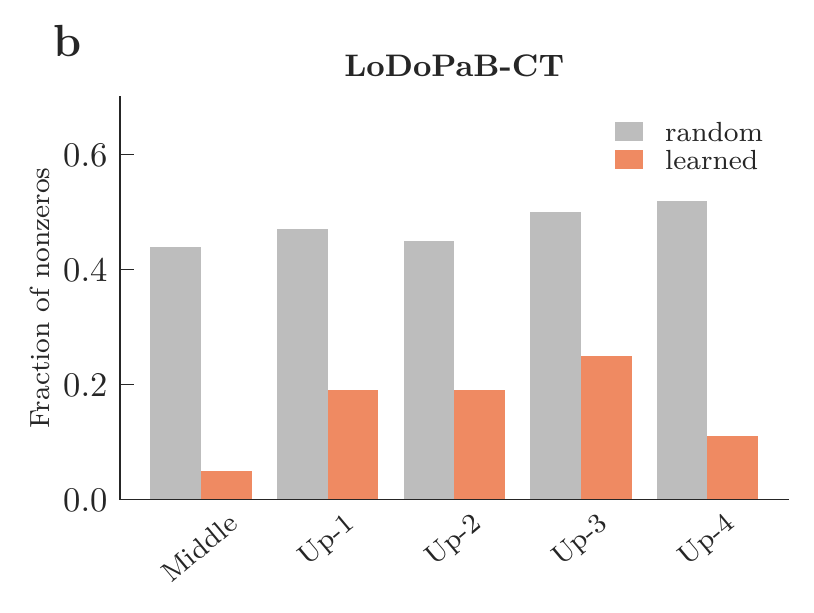} \\
    \includegraphics[width= 0.49 \textwidth]{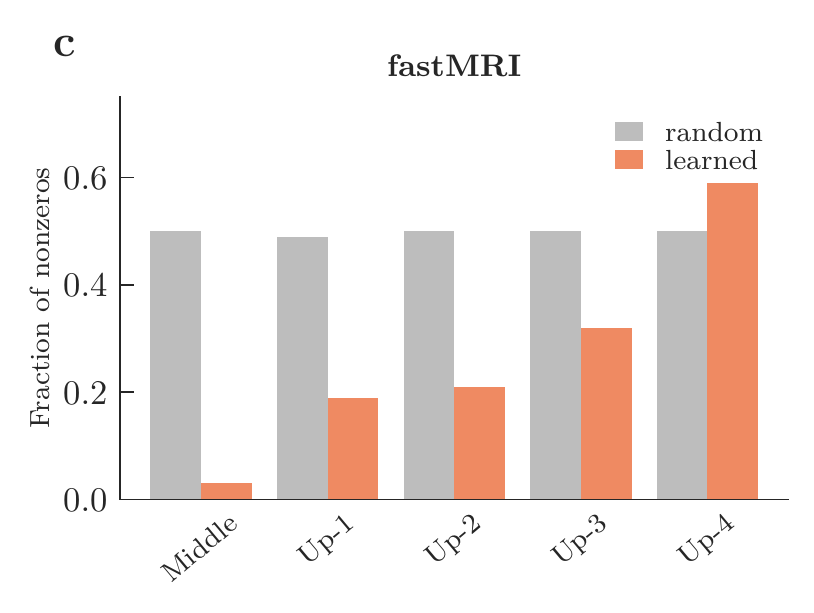}
    \includegraphics[width= 0.49 \textwidth]{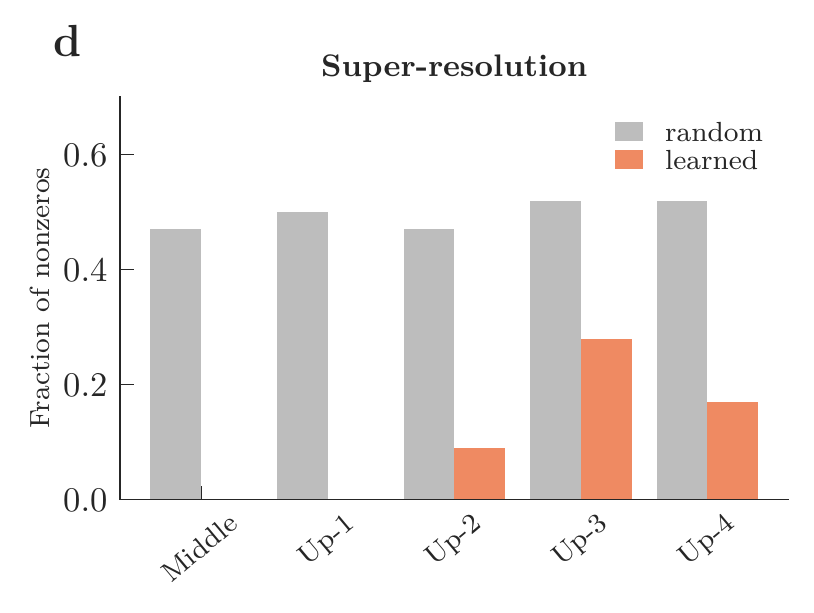}
    \caption{\textbf{Sparsity of dictionary-induced convolutional features maps.} Each bar corresponds to the sparsity level of a feature map tensor from the ``deepest'' activations corresponding to large-support atoms (``Middle'') to the ``shallowest' activations corresponding to small-support atoms (``Up-4'').}
    \label{fig:sparsity_fastmri}
\end{figure*}

\subsection{Single-image super-resolution}
We have additionally tested the MUSC on a standard super-resolution task, whose results are deferred to Appendix \ref{appendix:superresolution}. The goal of this task is to recover a high-resolution image from its degraded, low-resolution version. Unlike tasks such as CT and MRI reconstruction, in which the image degradation processes introduce long-range spatially correlated noise like streak artifacts, the blurring process in the super-resolution task is spatially local. 

In this case, we do not observe a performance gain of using a multiscale model -- either U-Net or MUSC -- over state-of-the-art single-scale \gls{csc} models. Interestingly, MUSC outperforms the U-Net, but is up to 0.5 dB worse than single-scale CSC.

In subsection \ref{subsection:probing}, we study this phenomenon by analyzing the sparse code yielded by MUSC. In the super-resolution tasks, the nonzeros in sparse codes are confined to high-resolution channels, or, equivalently, small filter supports which only leverage local information. This is well aligned with the intuition that the blurring forward operator mixes information only locally. It suggests that the right strategy is to use a large number of small-support filters just like CSC does, instead of ``wasting'' trainable parameters on unused large scales. We similarly find that a single-scale \gls{csc} model works better than MUSC on a denoising (Gaussian noise removal) task. Together, these findings suggest that multiscale features are no panacea for imaging inverse problems; the configuration of scales needs to resonate with the task-dependent forward operator that we aim to invert.

\subsection{Ablations on the choices of model components \label{subsec:ablation-study}} 
In Figure~\ref{fig:ablation}, we show ablation experiments that demonstrate how different choices of model components affect the overall performance. There, Case 2 is the off-the-shelf setup we have used in all other sections of this paper; this option has the fastest learning speed and highest end-point accuracy. Consistent with findings in \cite{Lecouat2020flexible, Lecouat2020fully, Zarka2020Deep}, we find it advantageous to use untied adjoints as described in \eqref{eq:relax-ista-execution}: Untied dictionaries (Cases 1, 2, and 4 in Figure~\ref{fig:ablation}) in general perform much better than {tied} dictionaries with $\encdict = \encdict = \decdict$ (Cases 5 and Case 6). What is more, we find that learnable threshold $\vlambda$ gives better results than fixed threshold. The non-negative constraint of sparse code $\valpha \geq 0$ does not greatly influence the end-point performance of models, although with the constraint the model learns slightly faster (Case 2) than without (Case 1).

\subsection{Probing multiscale dictionary-based representations \label{subsection:probing}}
Thus far, we have shown that our proposed multiscale \gls{csc} approach, dubbed MUSC, performs comparably to state-of-the-art \glspl{cnn} in a range of imaging inverse problems. This is noteworthy, as the strong performance is achieved simply by employing a multiscale dictionary -- as opposed to a single-scale one -- in an otherwise standard CSC paradigm. The strong performance suggests the usefulness of the multiscale representation. We now analyze our learned dictionaries and their induced sparse representations. %

\paragraph{Visualizing dictionary atoms}

We visualize dictionary atoms of the MUSC. To extract a dictionary atom from a dictionary $\mD$, we first prepare an \emph{indicator code} $\vdelta$, which is a collection of multichannel tensors that takes a value $1$ at a certain entry and $0$ elsewhere; a dictionary atom corresponds to that entry is computed as $\mD \vdelta$. Note that, different positions of the nonzero entry may give rise to atoms of different support sizes. This can be seen in Figure~\ref{fig:unet_schematic}\textbf{b}: The indicator code is illustrated as the grey boxes; depending on the grey box the nonzero entry resides in, the sparse code activates different receptive fields under composite convolutions and transposed convolutions. If the nonzero entry resides in the top-most box, then the support of the atom is 3 as it undergoes only a single 3 × 3 convolution; if the nonzero entry is in one of the lower boxes, the support of the atom is larger as the code undergoes multiple convolutions and one or more transposed convolutions. 

In Figure~\ref{fig:fastmri_atoms}, we show samples of multiscale atoms in $\decdict$ of varying sizes -- we crop these atoms to only show their nonzero support regions. As can be seen in Figure~\ref{fig:fastmri_atoms}\textbf{b}-\textbf{d}, the learned dictionaries contain Gabor-like or curvelet-like atoms with different spatial widths, resolutions, and orientations. Thus the learned dictionaries indeed exploit multiscale features. For comparison, we also show a randomly initialized dictionary (Figure~\ref{fig:fastmri_atoms}\textbf{a}). Unlike a learned dictionary, a random dictionary does not exhibit structures in atoms. We also visualize atoms of encoder dictionaries $\encdict$ and $\adjencdict$ in Appendix~\ref{subsection:ista-unet-other-dictionaries}. Using a similar technique, we also probe the multiscale representations learned by U-Nets in Appendix~\ref{subsection:probe_unet}.

\paragraph{Sparsity levels of representations}

We anticipate that the trained dictionaries induce different sparsity levels at different resolution levels in a task-dependent manner: More non-zeros associated with large-support atoms are useful when imaging artifacts have long-range correlations (e.g., streak artifacts in CT) than when the artifacts are localized (e.g., deraining or super-resolution). 

Figure~\ref{fig:sparsity_fastmri} shows the sparsity levels across tasks, both before and after dictionary learning. We observe that, prior to any learning, the sparsity levels induced by randomly initialized dictionaries (grey bars) are approximately uniform across scales. After learning, the sparsity levels of feature maps differentiate in a task-dependent way (orange bars in all panels).  This task-dependent differentiation suggests the usefulness of multiscale representations -- the learned sparsity levels are neither collapsed to a single scale nor remain uniform across spatial scales; instead, they are weighted and combined across scales in a problem-dependent way. A curious effect of multiscale learning arises in super-resolution (panel \textbf{d}): the activations are nonzero \emph{only in high-resolution features} (``Up-2'', ``Up-3'', and ``Up-4''), corroborating the intuition that low-resolution features are not important for this task. Additionally, comparing the ``Middle'' bars across panels, we see that CT and MRI reconstruction tasks indeed use more nonzero coefficients on large-support atoms than tasks such as deraining and super-resolution.

\section{Discussion}

The \gls{csc} paradigm provides a natural connection between sparse modeling and \glspl{cnn}. Despite being mathematical principled, existing \gls{csc} models still fall short competing with \glspl{cnn} in terms of empirical performance on challenging inverse problems. In this work, we report one simple and effective way to close the performance gap between \gls{csc} and \gls{cnn} models: incorporating a multiscale structure in the \gls{csc} dictionaries. Crucial to our approach is the structure of our constructed multiscale dictionary: It takes inspiration from and closely follows the highly successful U-Net model. We show that the constructed multiscale dictionary performs on par with leading \glspl{cnn} in major imaging inverse problems. These results suggest a strong link between dictionary learning and \glspl{cnn} -- in both cases, multiscale structures are essential ingredients.

Beyond empirical performance, we believe that the interpretability of the proposed MUSC is showing the way towards an interpretable deep learning model. An interpretable model consists of components whose objectives and functionality have nominal values. The MUSC fulfills this desideratum by incorporating two modules with well-understood objectives, one implementing sparse coding and the other dictionary-based synthesis. 

Overall, our work demonstrates the effectiveness and scalability of \gls{csc} models on imaging inverse problems. While deep neural networks are profoundly influencing image reconstruction, our work shows promise in a different direction: the principles of sparsity and multiscale representation developed decades ago are still useful in designing performant, parameter-efficient (compared to mainstream CNNs), and interpretable architectures that push the current limits of machine learning for imaging inverse problems.

\section*{Acknowledgments}
We thank Johannes Leuschner for helpful discussions regarding the LoDoPaB-CT dataset.  Numerical experiments were partly performed at the sciCORE (\url{http://scicore.unibas.ch/}) scientific computing center at the University of Basel. We thank all anonymous reviewers whose comments helped improve this manuscript.

\newpage
\bibliographystyle{IEEEtran}
\bibliography{IEEEabrv,ista_unet}

\clearpage

\begin{appendices}

\section{The definition of the MUSC dictionary \label{appendix:decoder_dictionary_description}}

In the main text, we illustrate the architecture of a U-Net (Figure~\ref{fig:unet_schematic}\textbf{a}) and the corresponding MUSC decoder dictionary (Figure~\ref{fig:unet_schematic}\textbf{b}). Loosely speaking, the decoder dictionary is the decoding branch of a standard U-Net with all ReLU activations, batch normalization, and some convolution operations removed. Here we provide a formal definition of the decoder dictionary. While in the main text we assume for simplicty that signals are 1D vectors, in this section we represent RGB images as multichannel tensors. To that end, we first consider \gls{siso} operations whose input and output are 2D signals having a single channel. We follow by considering \gls{mimo} operations whose input and output are 3D tensors having multiple channels.

\paragraph{SISO convolution and transposed convolution}
Let $\vxi \in \RR^{H \times W}$ be a 2D signal with height $H$ and width $W$. We regard $\vxi$ as a function defined on the discrete domain $\{1, \ldots, H \} \times \{1, \ldots, W \}$. With standard zero padding, we extend the domain of $\vxi$ to $\ZZ \times \ZZ$. The notation $\vxi[i, j]$ represents the value of the function $\vxi$ at the coordinate $(i, j)$.  

\paragraph{SISO convolution} Given a 2D signal $\vxi \in \RR^{H \times W}$ and parameters (filter weights, filter impulse response) $\vw \in \RR^{3 \times 3}$, the convolution of $\vxi$ and $\vw$ is defined as
\begin{equation}%
    (\vw \ast \vxi)[i, j] = \sum_{p=-1}^{1} \sum_{q=-1}^{1} \vxi[i+p, j+q] \cdot\vw[p, q].
\end{equation}

\paragraph{SISO transposed convolution}

A transposed convolution in the U-Net consists of a bed-of-nails upsampling by a factor of two followed by a 2-by-2 convolution with an interpolating filter. The bed-of-nails upsampling interleaves zeros between samples, which can be written as $\vxi \otimes \upmat$ with $\otimes$ denoting the Kronecker product. The 2-by-2 convolution between the filter $\vv \in \RR^{2 \times 2}$ and signal $\vxi \in \RR^{H \times W}$ is written as
\begin{equation}%
    (\vv \circledast \vxi)[i, j] = \sum_{p=0}^{1} \sum_{q=0}^{1} \vxi[i+p, j+q] \cdot\vv[p, q].
\end{equation}

Putting these together, we have the following definition for a transposed convolution. %
\begin{defn}
A transposed convolution $\Ucal_{\vv}$ parameterized by the convolutional filter $\vv \in \RR^{2 \times 2}$ is a map defined by
\begin{align*}
  \Ucal_{\vv} \colon  \RR^{n \times n } &\to \RR^{2n \times 2n}  \\
 \vxi & \mapsto \vv \circledast (\vxi \otimes \upmat ).
\end{align*}
\end{defn}

The following technical result simplifies the upsampling calculation.
 \begin{prop} \label{prop:circledast_id} %
 Let $\vv \in \RR^{2 \times 2}$ be a matrix and let $\bar{\vv}$ be its vertically and horizontally flipped version, i.e., $\bar{\vv}[i, j] = \vv[3 - i, 3 - j]$ for $(i, j) \in \{1, 2\} \times \{1, 2\}$. Let $\vxi \in \RR^{n \times n}$ be a  matrix with dimension $n \geq 2$. Then the following equation holds:
 \begin{equation}
    \Ucal_{\vv} (\vxi) = \vxi \otimes \bar{\vv}. \label{eq:circledast_id}
 \end{equation}
 \end{prop}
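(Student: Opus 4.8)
The plan is to verify the identity $\Ucal_{\vv}(\vxi) = \vxi \otimes \bar{\vv}$ by directly computing the value of each side at an arbitrary coordinate $(i,j) \in \{1,\ldots,2n\} \times \{1,\ldots,2n\}$ and checking that they agree. Both sides are $2n \times 2n$ matrices, so it suffices to match entries. The key bookkeeping device is to write $i = 2a + r$ and $j = 2b + s$ with $r, s \in \{0,1\}$ (so that $a,b$ index the ``coarse'' block and $r,s$ index the position within a $2\times 2$ block); this mirrors the block structure that the Kronecker product $\vxi \otimes \bar{\vv}$ imposes, where the entry is simply $\vxi[a+1,b+1] \cdot \bar{\vv}[r+1,s+1]$ up to the indexing convention.

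First I would expand the left-hand side. The bed-of-nails upsampling $\vxi \otimes \upmat$ (with $\upmat$ the $2\times 2$ matrix having a single $1$ in one corner and zeros elsewhere) produces a signal that equals $\vxi[a,b]$ on coordinates of one fixed parity class and zero otherwise. Then applying the $2\times 2$ correlation $\vv \circledast (\cdot)$ via the definition $(\vv \circledast \vxi)[i,j] = \sum_{p,q=0}^{1} \vxi[i+p,j+q]\,\vv[p,q]$ picks out, for each output coordinate, exactly one nonzero term from the upsampled array, because within any $2\times 2$ window of the upsampled signal only one entry is nonzero. Tracking which $(p,q)$ survives as a function of the parity of $(i,j)$ is the crux: the surviving index is determined by $r,s$, and the corresponding filter weight $\vv[p,q]$ turns out to be $\vv[3-(r+1), 3-(s+1)] = \bar{\vv}[r+1,s+1]$ after accounting for the flip built into the definition of $\bar{\vv}$ and for correlation-versus-convolution indexing. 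Matching this against the Kronecker-product entry $\vxi[a+1,b+1]\,\bar{\vv}[r+1,s+1]$ closes the argument.

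The main obstacle is purely notational: reconciling the three separate indexing conventions in play — the $1$-based coordinates used for $\vxi$ and for $\bar{\vv}$, the $0$-based summation range $\{0,1\}$ in the definition of $\circledast$, and the parity decomposition of the output coordinate. A careless off-by-one will flip the roles of $\vv$ and $\bar{\vv}$. I would therefore fix conventions explicitly at the outset (stating that $\vxi$ is zero-padded to $\ZZ\times\ZZ$ and that $\Ucal_{\vv}(\vxi)$ is supported on $\{1,\ldots,2n\}^2$, with the specific corner chosen for $\upmat$ stated once and for all), handle a representative parity case in detail, and then remark that the remaining three parity cases are identical up to relabeling. A small sanity check — evaluating both sides on, say, $\vxi = $ a single indicator matrix — can be mentioned to confirm the flip lands correctly. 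Since $n \geq 2$ guarantees that the relevant $2\times 2$ windows stay within the (zero-padded) domain without degenerate boundary interactions, no special boundary analysis beyond the standard zero-padding convention is needed.
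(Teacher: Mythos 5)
Your proposal is correct and follows essentially the same route as the paper: the paper verifies the identity block-by-block over the $n\times n$ grid of $2\times 2$ submatrices, using exactly your observation that each $2\times 2$ correlation window of the bed-of-nails upsampled array contains a single nonzero entry, so that the surviving filter weight is the flipped one; your parity decomposition $i=2a+r$, $j=2b+s$ is just the entrywise restatement of that block bookkeeping. The only caveat is the off-by-one in your displayed formula $\bar{\vv}[r+1,s+1]$ (and the unspecified corner of $\upmat$), which you yourself flag and plan to fix by pinning down conventions, so no substantive gap remains.
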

 \begin{proof}
 
 Observe that the result of LHS and RHS are both $2n$-by-$2n$ matrices, each composed of $n \times n$ number of 2-by-2 submatrices. It is enough to show that each $2$-by-$2$ sub-matrix of the LHS equals that of the RHS if they have the same position indices. 
 
 We first expand the RHS. By definition of Kronecker product, we have
 \[ \vxi \otimes \bar{\vv} ={\begin{bmatrix} \vxi[1, 1] \bar{\vv} & \cdots & \vxi[1, n]  \bar{\vv} \\\vdots &  \ddots & \vdots \\  \vxi[n, 1] \bar{\vv} &  \cdots &  \vxi[n, n] \bar{\vv} \end{bmatrix}} \in \RR^{2n \times 2n}, \]
 where the $(i, j)$-th 2-by-2 sub-matrix of $\vxi \otimes \bar{\vv}$ has the form
\begin{equation} \label{eq:circledast_id_rhs}
\vxi[i, j] \bar{\vv}  =  \begin{bmatrix} \vxi[i, j] \cdot {\vv} [2,2] & \vxi[i, j] \cdot {\vv} [2,1] \\ \vxi[i, j] \cdot {\vv} [1,2] & \vxi[i, j] \cdot {\vv} [1,1] \end{bmatrix}, 
\end{equation}
for each $(i, j) \in \{1, \cdots, n \} \times \{1, \cdots, n \}$. 
 
For the LHS, we have $\Ucal_{\vv}(\vxi) = \vv \circledast (\vxi \otimes \upmat )$ by definition. Writing out the Kronecker product, we see that the $(i, j)$-th 2-by-2 submatrix of the LHS has the form
\begin{equation} \label{eq:circledast_id_lhs}
\vv \circledast \begin{bmatrix} 0 & 0 & 0  \\ 0 & \vxi[i, j] & 0 \\  0 & 0 & 0 \end{bmatrix}  = \begin{bmatrix} \vxi[i, j] \cdot \vv[2,2] & \vxi[i, j] \cdot \vv[2,1] \\ \vxi[i, j] \cdot \vv[1,2] & \vxi[i, j] \cdot \vv[1,1] \end{bmatrix},
\end{equation}
for each $(i, j) \in \{1, \cdots, n \} \times \{1, \cdots, n \}$. 
Comparing Equation~\eqref{eq:circledast_id_rhs} and Equation~\eqref{eq:circledast_id_lhs}, we see the each 2-by-2 sub-matrix of the LHS and RHS of Equation~\eqref{eq:circledast_id} are identical as claimed.
 \end{proof}
 
\paragraph{MIMO convolution and transposed convolution}

To extend our formulation to the \gls{mimo} case, we denote by $\vx = ( \vxi_1, \ldots, \vxi_C )  \in \RR^{C \times H \times W}$ a multi-channel signal. 

\paragraph{MIMO convolution} We let $\mW = ( \vw_{11}, \ldots,  \vw_{1C},  \ldots, \vw_{M1}, \ldots, \vw_{MC} ) \in \RR^{M \times C \times 3 \times 3}$ be a multi-channel convolutional kernel of 3-by-3 filters, where $C$ denotes the number of input channels and $M$ denotes the number of output channels. The MIMO convolution is defined as 
\[ \conv(\mW, \vx) = \sum_{c=1}^C \left(\vw_{1 c} \ast \boldsymbol{\xi}_{c}, \ldots, \vw_{M c} \ast \boldsymbol{\xi}_{c}\right) \in \mathbb{R}^{M \times H \times W}. \]

\paragraph{MIMO transposed convolution} Let $\vx = ( \vxi_1, \ldots, \vxi_C )  \in \RR^{C \times H \times W}$ be a multi-channel signal. We let $\mV = ( \vv_{11}, \ldots,  \vv_{1C},  \ldots, \vv_{M1}, \ldots, \vv_{MC} ) \in \RR^{M \times C \times 2 \times 2}$ be a multi-channel convolutional kernel of 2-by-2 filters. The MIMO transposed convolution is defined as 

\[ \tconv(\mV, \vx) = \sum_{c=1}^C \left( \vxi_{c} \otimes \bar{\vv}_{1c}, \ldots, \vxi_{c} \otimes \bar{\vv}_{Mc} \right) \in \mathbb{R}^{M \times 2H \times 2W}. \]

\paragraph{Up-block}
An up-block receives multichannel input from (i) low-resolution features in the shape of $(2C, H, W)$ and (ii) skip-connection features in the shape of $(C, 2H, 2W)$. With these input features, an up-block applies a transposed convolution to low-resolution features that halves the number of channels, concatenates the resulting features with skip-connection features, and then submits the concatenated results for another convolution. This can be written as 

\begin{footnotesize}
\begin{align*}
  \text{Up}_{\mW, \mV}: \RR^{C \times 2H \times 2W} \times \RR^{2C \times H \times W}   & \to \RR^{C \times 2H \times 2W} \\
 (\vbeta, \vx) & \mapsto \conv( [\vbeta ; \tconv( \vx, \mV) ], \mW),
\end{align*}
\end{footnotesize} where $\mW \in \RR^{2C \times C \times 3 \times 3}$ and $\mV \in \RR^{2C \times C \times 2 \times 2}$ are parameters and $[\cdot ; \cdot]$ denotes channel-wise concatenation.

\paragraph{The multiscale dictionary}
We now define the multiscale dictionary
\begin{align}
\mD : \RR^{ C \times H \times W} \times \cdots \times \RR^{ C/2^4 \times 2^4 \cdot H \times 2^4 \cdot W} & \to \RR^{C^{\textrm{out}} \times 2^4 \cdot H \times 2^4 \cdot W} \notag \\
(\valpha_0, \ldots, \valpha_4) & \mapsto \mD (\valpha_0, \ldots, \valpha_4)
\end{align}
that is used as encoders and decoders in our work. Here, $C^{\textrm{out} } = 1 \textrm{~or~} 3$ for grayscale and RGB images. We construct the dictionary $\mD$ as a linear transform $\mR$ that is independent of $C^{\textrm{out} }$ followed by a one-by-one convolution that produce $C^{\textrm{out}}$ channels:
\begin{equation}
\mD = \texttt{Conv1x1} \circ \mR.  
\end{equation}
The linear map $\mR$ is defined by cascading $4$ Up-blocks with parameters $\mW_i$ and $\mV_i, i = 1, \ldots, 4$:
\begin{align}
\mR : \RR^{ C \times H \times W} \times \cdots \times \RR^{ C/2^4 \times 2^4 \cdot H \times 2^4 \cdot W} & \to \RR^{ C/2^4 \times 2^4 \cdot H \times 2^4 \cdot W} \notag \\
 (\valpha_0, \cdots,  \valpha_4 ) & \mapsto \vxi_4,
\end{align} %
where 
\[\vxi_i = \text{Up}_{\mW_i, \mV_i} (\valpha_i, \vxi_{i-1}) \]
for $i = 1, \ldots 4$ with $\vxi_0 = \valpha_0$. That is, the function $\mR$ transforms multiscale sparse code $(\valpha_0, \ldots, \valpha_4)$ to a tensor $\vxi_4$ of shape $\RR^{ C/2^4 \times 2^4 \cdot H \times 2^4 \cdot W}$. The $1\times 1$ convolution operator $\mC$ next synthesizes these features into a tensor with channel number 1 or 3:
\begin{equation}
\texttt{Conv1x1}:  \RR^{ C/2^4 \times 2^4 \cdot H \times 2^4 \cdot W} \to \RR^{C^{\textrm{out}} \times 2^4 \cdot H \times 2^4 \cdot W}.
\end{equation}

\section{Can a learned sparse coder yield dense outputs?}
In the main text, we introduced the $K$-fold ISTA algoritm for sparse coding. The sparse coding result reads $\ista_{K} (\vz; \encdict, \vlambda ) $, where $\vz$ is a given input, $\encdict$ is an encoding dictionary, and $\vlambda$ is a threshold tensor. Since we also learn a $\vlambda$ from data during training, one concern is whether a learned $\vlambda$ be zero. Indeed, if this were to occur, the sparse coding results would be dense and it would defeat the purpose of sparse coding.

In this section, we argue that $\vlambda = 0$ is (i) not optimal for many inverse problems and (ii) unlikely to be learned from data. To see the non-optimality of $\vlambda = 0$, we observe that with the iteration number $K$ is large enough, with $\vlambda = 0$, and with zero initialization of sparse code, ISTA yields the smallest $\ell^2$-norm solution which can be written via the Moore--Penrose pseudoinverse,
$$
    \lim_{K \to \infty} \ista_{K}(\vz; \encdict, \mat{0}) = \encdict^+ \vz.
$$
Our task-driven dictionary learning problem thus has the form 
$$
\minimize_{\encdict,\decdict } \quad  \frac{1}{2M} \sum_{i = 1}^M \| \decdict \encdict^+ \valpha_{\vz_i} - \vgt_i  \|_2^2.
$$
This implies that $\lambda = 0$ results in a linear reconstruction method which comes with all the known drawbacks of linear methods. In particular, it cannot do better than the linear minimum mean square error (LMMSE) estimator (a generalized Wiener filter). Since regularization in ill-posed inverse problems entails the use of data models, and most useful data models are nonlinear (e.g., natural and medical images are known to be sparse or compressible in wavelet frames, but they do not belong to any linear subspace), these problems demand $\vlambda > 0$.

One may still wonder whether our learning procedure will overfit finite datasets with $\vlambda = 0$. To see that this will not happen, note that $\decdict$ and $\encdict$ are constrained to have a specific  structure: They are multiscale variants of block-Toeplitz matrices. Additionally, filters at high resolutions are convolutions of filters at lower resolutions which induces a rather complicated algebraic structure. As a result, the set of valid dictionaries in our model has a much smaller dimension than the set of all possible dictionaries of correct size, and solving \eqref{eq:ista_unet_objective} cannot be reduced to finding two generic overcomplete dictionaries that ``overfit'' to training data to achieve zero loss. In fact, forcing $\vlambda = 0$ typically incurs a large loss in \eqref{eq:ista_unet_objective}, so $\vlambda = 0$ is unlikely to be learned from data. This is the magic of multiscale convolutional sparsity.

\section{Power iteration \label{appendix:power_iter}}
We describe how to approximate the dominant eigenvalue of the matrix $\encdict^\top \encdict$ using by power iteration. We achieve this by first estimating the eigenvector associated with the dominant eigenvalue using the power iteration method, by recursively calculating
\begin{equation}
\vb_{k+1}=\frac{ \encdict^\top  \encdict \vb_k   }{\left\| \encdict^\top  \encdict \vb_k \right \|_2}
\end{equation}
up to some step $K$ with $\vb_0$ being an all-zero vector. The estimated dominant eigenvalue can then be derived from 

\begin{equation}
\lambda_{K}= \frac{ \vb_{K}^\top  \encdict^\top \encdict \vb_{K} }{\vb_{K}^\top \vb_K }.
\end{equation}

\section{Details of the experimental setup \label{appendix:experiment_setup}}

\begin{table*}[!htbp]
\centering
\begin{tabular}{cccccccccc}\toprule
 &   \multicolumn{3}{c}{Conv $3 \times 3$}   & \multicolumn{3}{c}{Trans-Conv $2 \times 2$}  & \multicolumn{3}{c}{Trans-Conv $1 \times 1$} \\ 
  \cmidrule(r){2-4} \cmidrule(r){5-7} \cmidrule(r){8-10} 
& in-channels & out-channels & stride &   in-channels & out-channels & stride & in-channels & out-channels & stride \\ \midrule
Scale 1 &  512  & 512  & 1 & 512 & 256 & 2 & -- & -- & --  \\
Scale 2 &  512  & 256  & 1 & 256 & 128 & 2 & -- & -- & --  \\
Scale 3 &  256  & 128  & 1 & 128 & 64 & 2 & -- & -- & --  \\
Scale 4 &  128  & 64  & 1 & 64 & 32 & 2 & -- & -- &  -- \\
Scale 5 &  64  & 32  & 1 & -- & -- & -- & 32 & 1 or 3 & 1   \\
\bottomrule
\end{tabular}
\vspace*{1mm}
\caption{ Parameters used at each scale of the encoder and decoder dictionaries. Scale 1 corresponds to the low-resolution scale (the bottom-most gray box in Figure 1\textbf{b}) and Scale 5 correspond to the high-resolution scale (the top-most gray box in Figure 1\textbf{b}). \label{fig:parameter-dictionary} }
\end{table*}

In Table~\ref{fig:parameter-dictionary}, we summarize parameters used in each scale of our dictionaries. In Table~\ref{tab:ista_unet_hyperparams} and Table~\ref{tab:classic_unet_hyperparams}, we summarize the hyperparameters used for training MUSC and the U-Net baseline. 

\begin{table}[h]
\centering
\begin{tabular}{lcccccc} \toprule
Task                                        & \shortstack{Epoch \\ number} & \shortstack{Batch \\ size}   & \shortstack{Learning \\ rate }  & \shortstack{Num \\ ISTA steps} & \shortstack{Lasso \\ parameter}  \\ \midrule
Derain                                      & 20000                      &16                &5e-4                & 5      &0.001         \\
LoDoPaB-CT                                  &70                          &2                 &2e-4                 &5       &0.001         \\
fastMRI                                     &70                          &2                   &5e-5                &5       &0.001            \\
Super-resolution                            &1500                          &2                   &5e-5                &5       &0.001            \\
\bottomrule
\end{tabular}
\vspace*{0.2cm}
\caption{\textbf{Hyperparameters used for the MUSC.} }
\label{tab:ista_unet_hyperparams}
\end{table}
For Derain and super-resolution task, we randomly crop images into $128 \times 128$ patches during gradient-descent training.

\begin{table}[H]
\normalsize
\centering
\footnotesize
\begin{tabular}{lccccccc} \toprule
Task                                         & \shortstack{Epoch \ ber} & \shortstack{Batch \\ size} &  \shortstack{Learning \\ rate}  \\ \midrule
LoDoPaB-CT                                   &70                        &32              &   0.001                     \\
fastMRI                                      &70                        &8               &   0.001               \\
\bottomrule
\end{tabular}
\vspace*{0.2cm}
\caption{\textbf{Hyperparameters used for the U-Net.} }
\label{tab:classic_unet_hyperparams}
\end{table}

Note that, for CT and MRI tasks, U-Nets and MUSCs are trained on full images; in deraining and super-resolution tasks,  U-Nets and MUSCs are trained on cropped images, where sizes of the cropped images are in Table \ref{tab:classic_unet_hyperparams}. Since in the deraining and LoDoPaB-CT tasks the range of the target images is non-negative, we clip the negative values of the synthesized image. During model training, we use weight normalization \cite{Salimans2016weight}, a reparametrization trick that decouples the magnitude of a convolutional filter from its direction. To enforce the positivity of the ISTA parameter $\vlambda,$ we reparametrize $\vlambda = \textrm{ReLU}(\tilde{\vlambda}) +$1e-5 and perform gradient-based learning on $\tilde{\vlambda}$ instead.

\section{Visualizing the atoms in encoder dictionaries \label{subsection:ista-unet-other-dictionaries}}

Consistent to how we visualize decoder dictionary atoms in Section \ref{subsection:probing}, we visualize atoms from/ encoder dictionaries $\encdict$ and $\adjencdict$ in Figure \ref{fig:encoder-atoms} and Figure \ref{fig:adj-encoder-atoms}. 
\begin{figure*}[!ht] 
\centering
\includegraphics[width= 0.49 \textwidth]{figs/rand_dictionary.pdf} 
\includegraphics[width= 0.49 \textwidth]{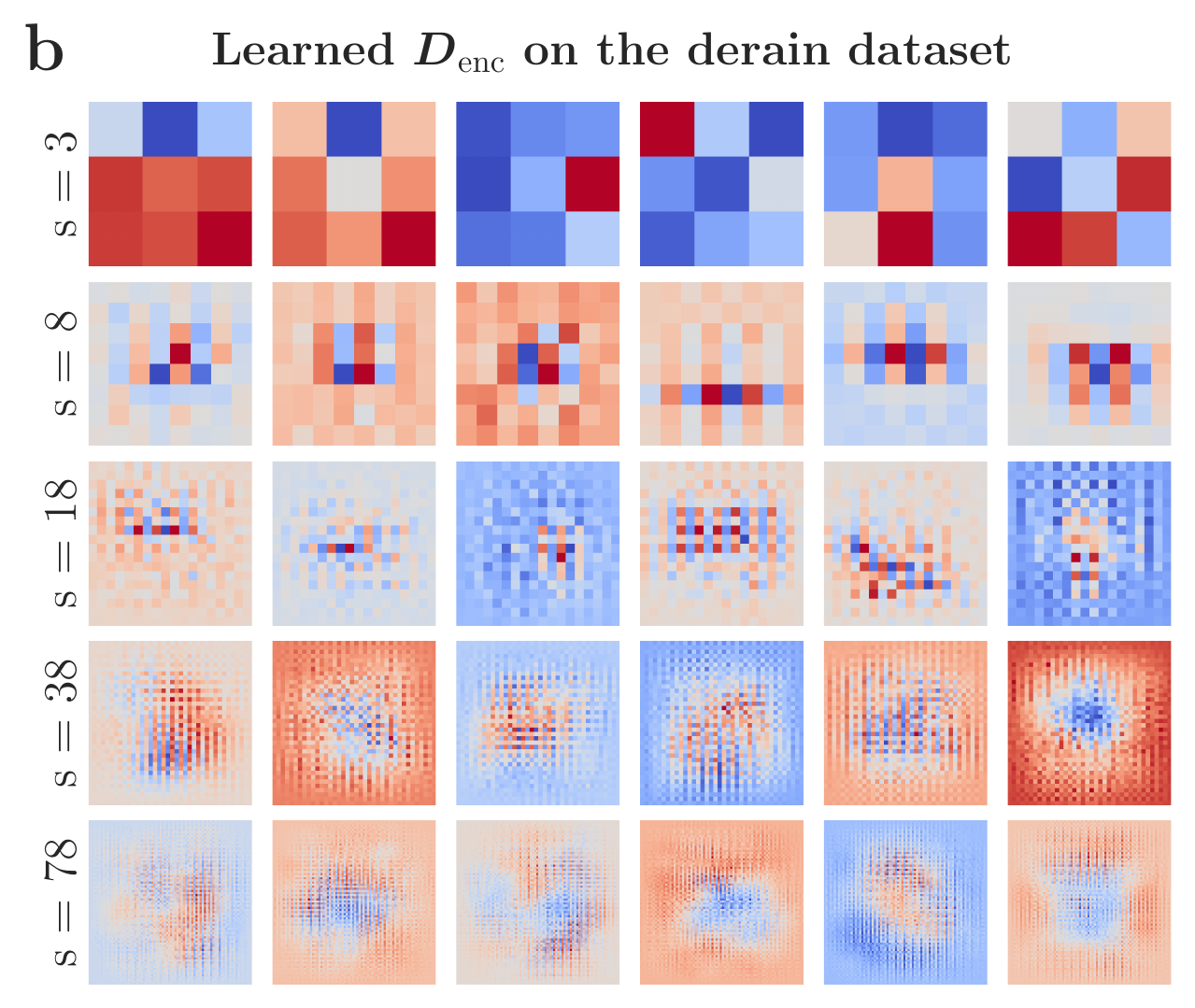} \\
\includegraphics[width= 0.49 \textwidth]{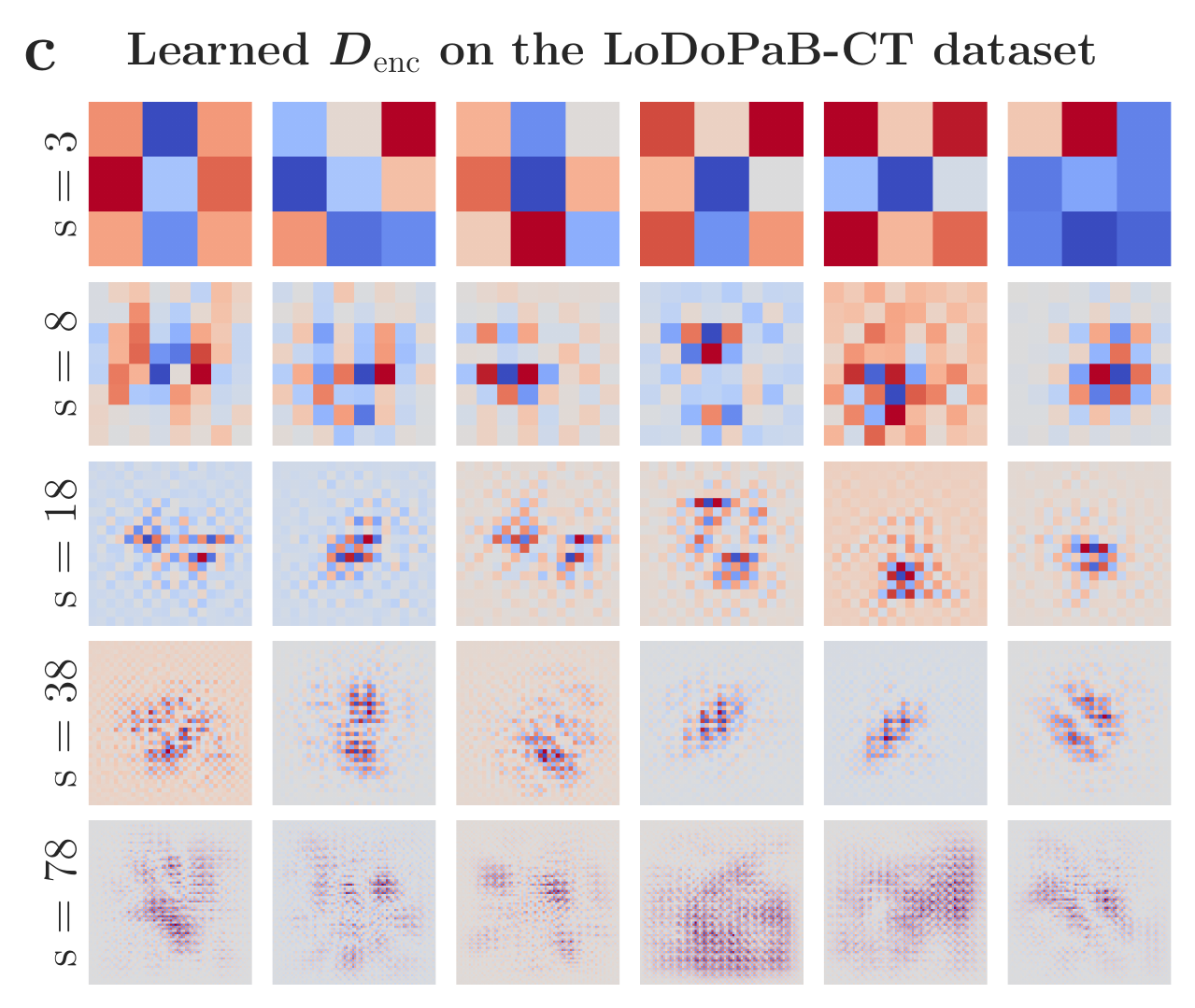}  
\includegraphics[width= 0.49 \textwidth]{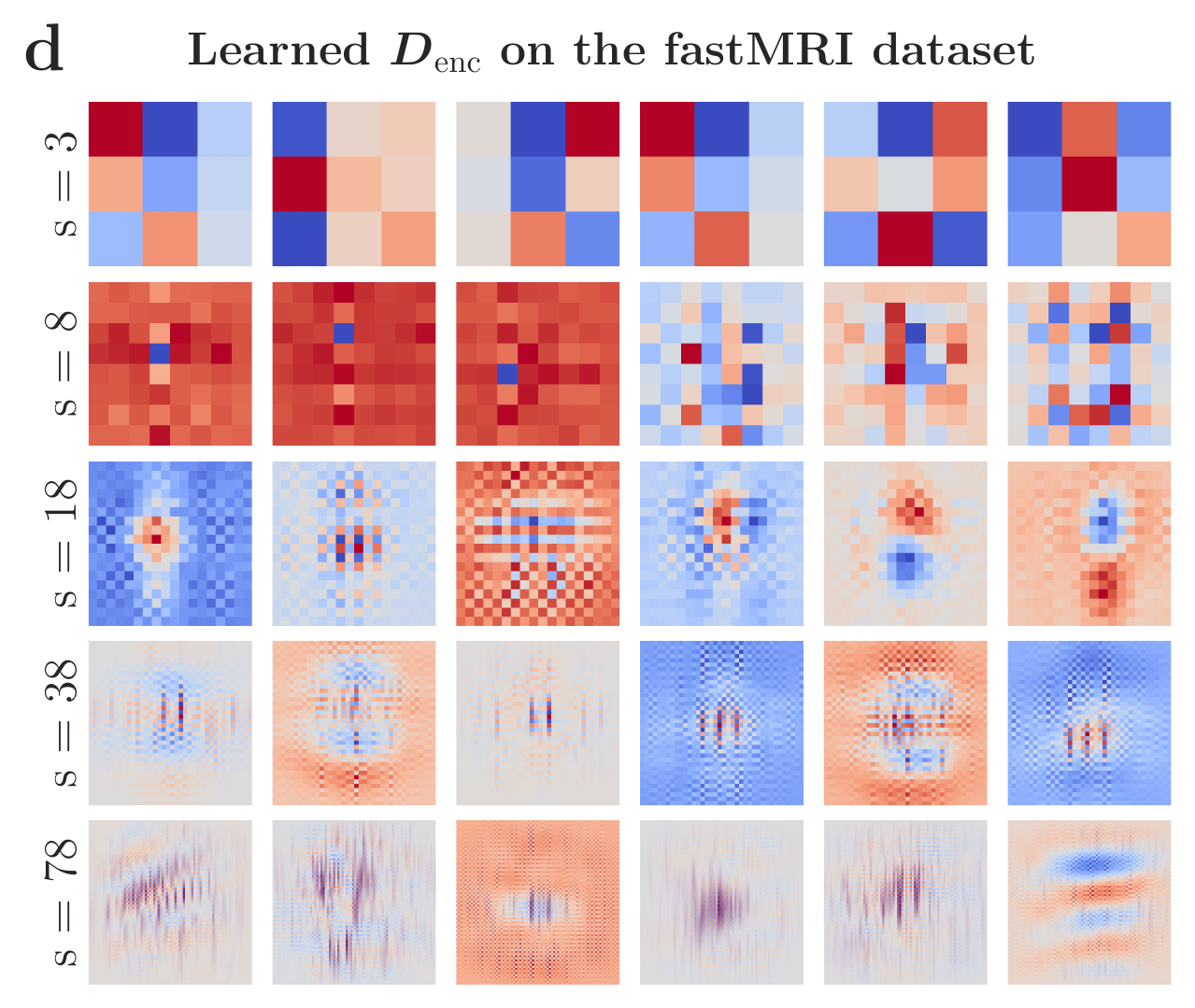} \\
\vspace*{0.5mm}
    \caption{\textbf{Atoms in a randomly initialized (panel~\textbf{a}) and learned $\encdict$ based on the derain dataset (panel~\textbf{b}), LoDoPab-CT (panel~\textbf{c}) and fastMRI (panel~\textbf{d}) dataset.} The visualization setup is identical to Figure~\ref{fig:fastmri_atoms} in the main text. \label{fig:encoder-atoms}}
\end{figure*}

\begin{figure*}[!ht] 
\centering
\includegraphics[width= 0.49 \textwidth]{figs/rand_dictionary.pdf} 
\includegraphics[width= 0.49 \textwidth]{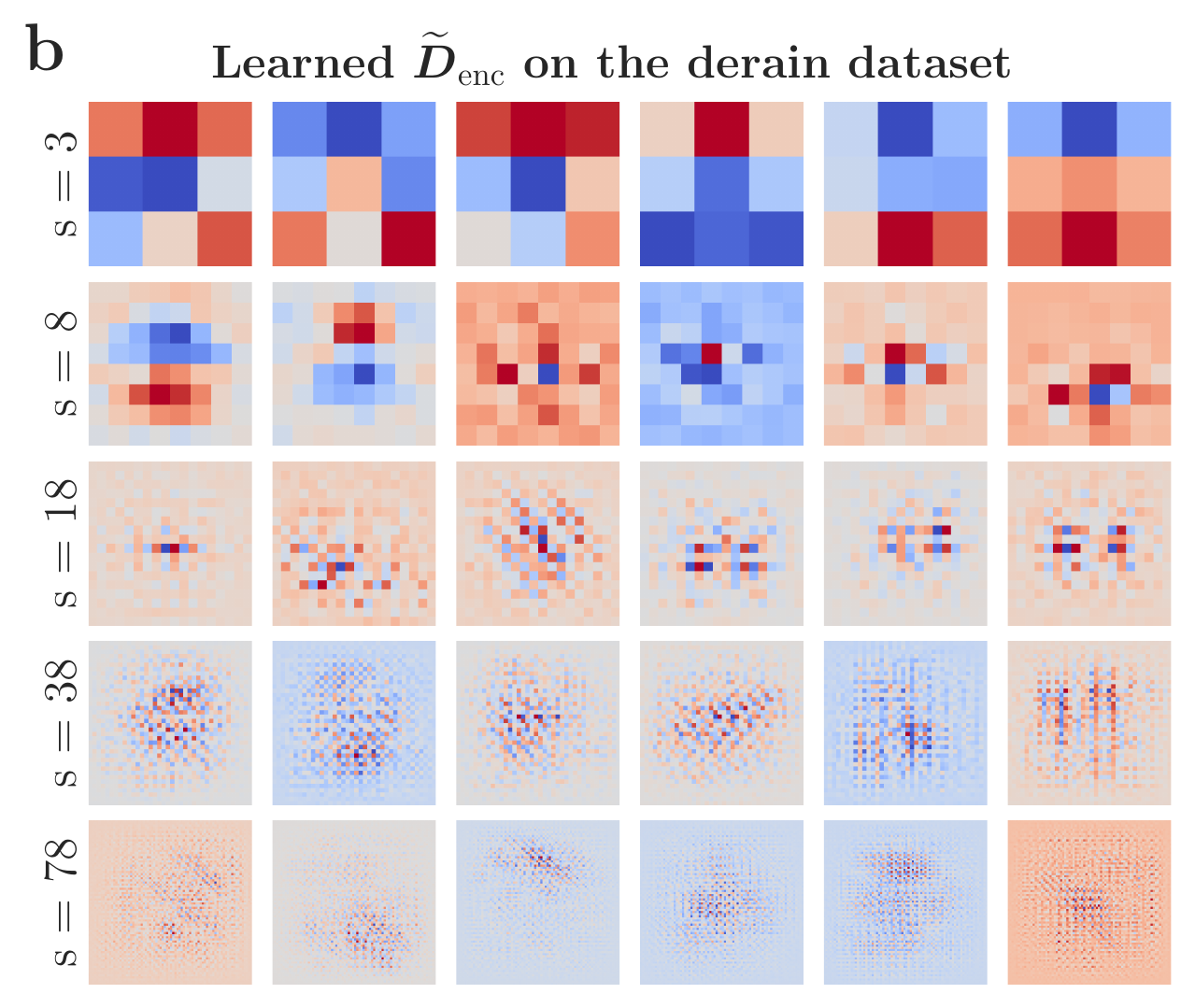} \\
\includegraphics[width= 0.49 \textwidth]{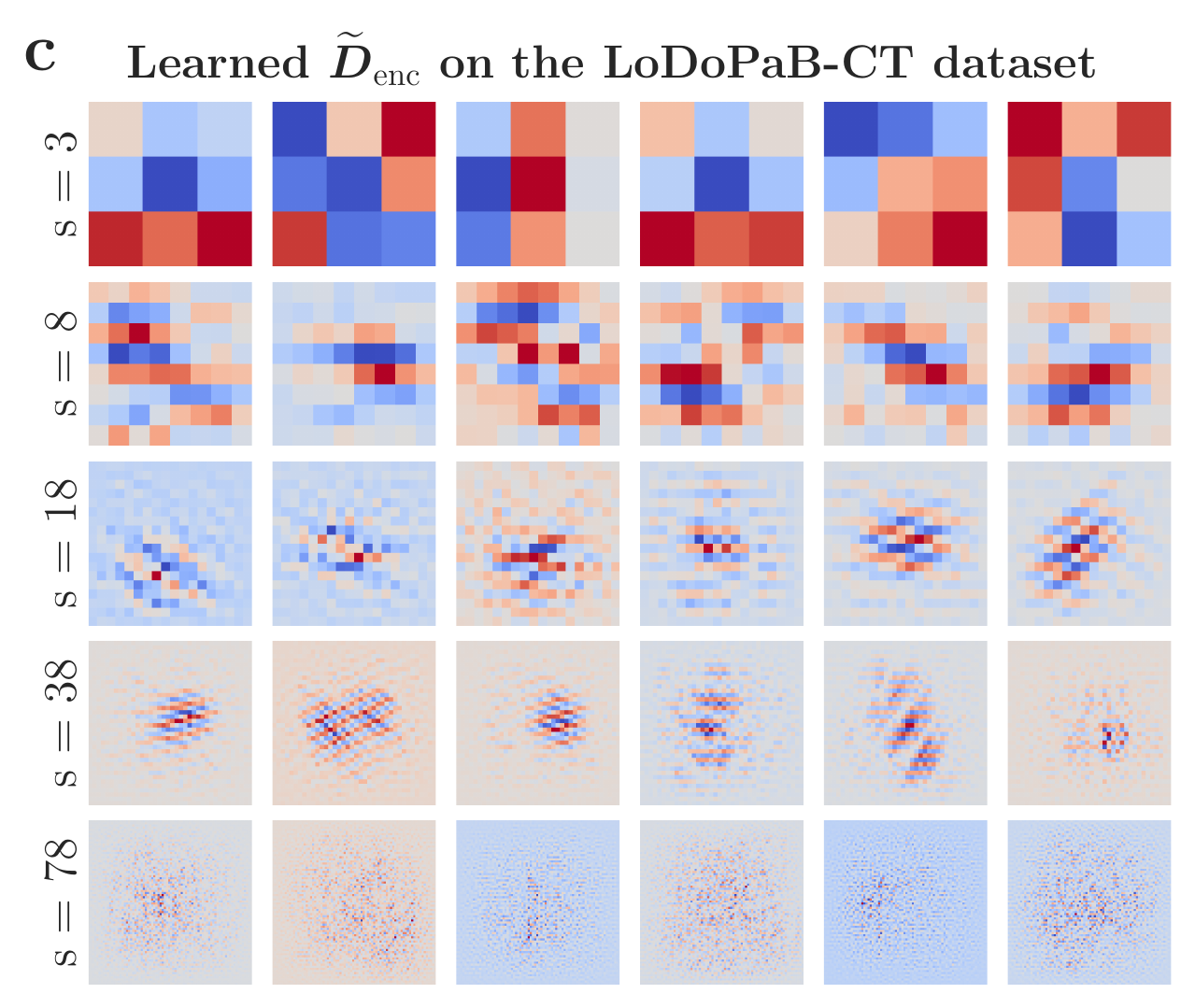}  
\includegraphics[width= 0.49 \textwidth]{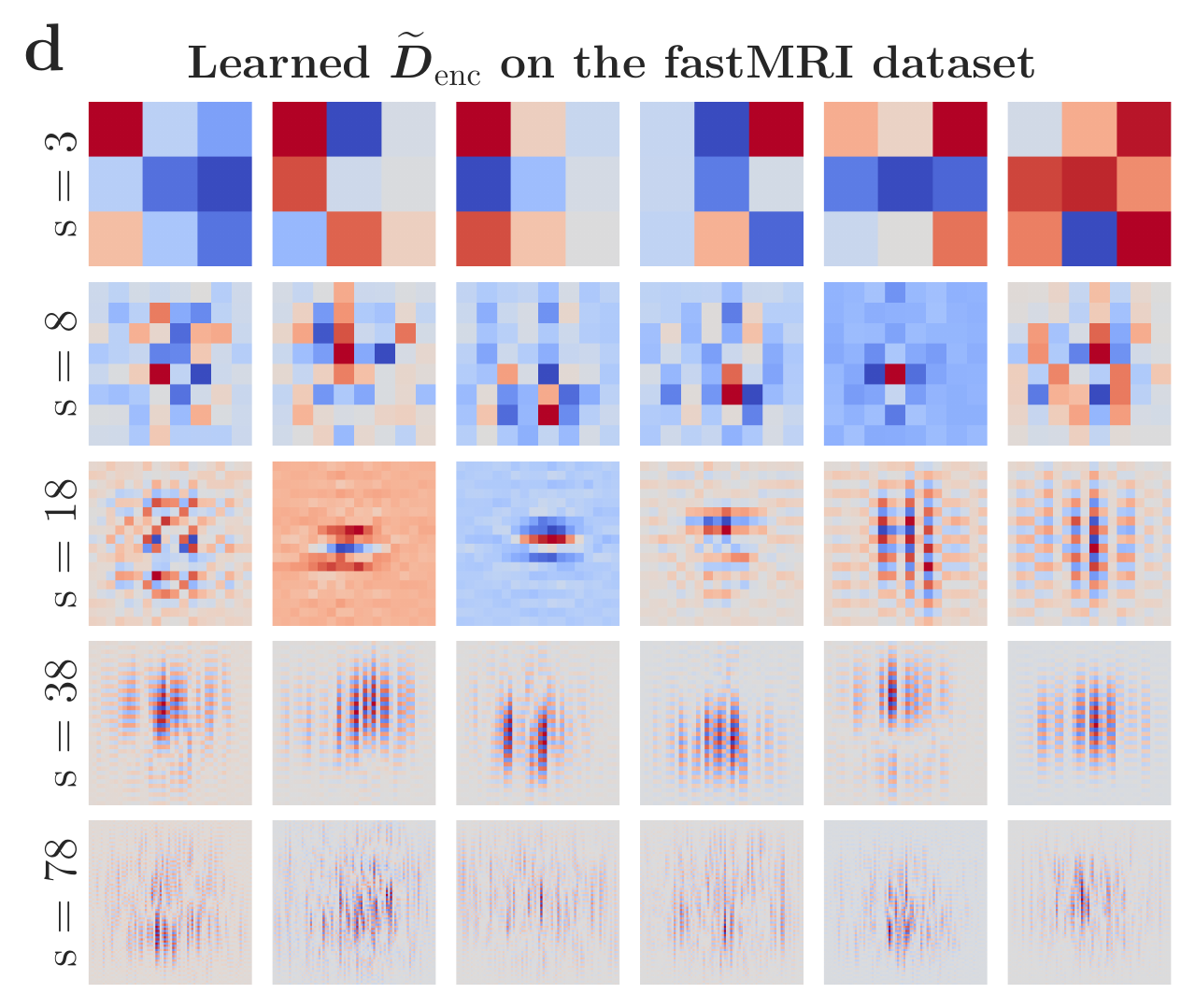} \\
\vspace*{0.5mm}
    \caption{\textbf{Atoms in a randomly initialized (panel~\textbf{a}) and learned $\adjencdict$ based on the derain dataset (panel~\textbf{b}), LoDoPab-CT (panel~\textbf{c}) and fastMRI (panel~\textbf{d}) dataset.} The visualization setup is identical to Figure~\ref{fig:fastmri_atoms} in the main text. \label{fig:adj-encoder-atoms}}
\end{figure*}

\section{Visualizing the representation of U-Nets \label{subsection:probe_unet}}

Similar to how we visualize dictionary atoms in Section \ref{subsection:probing}, we visualize prototypical images that U-Nets synthesize through its decoder branch $f^{\text{dec}}(\cdot, \vgamma)$. Concretely, we first prepare a set of indicator codes corresponding to different spatial resolutions as described in Section \ref{subsection:probing}. We then feed each indicator code $\vdelta$ into the decoder branch of a U-Net to yield $f^{\text{dec}}(\vdelta, \vgamma)$. Due to additive biases and batchnorm modules of the U-Net, the synthesized output $f^{\text{dec}}(\vdelta, \vgamma)$ has the same support as the full image. To focus on the region influenced by the indicator code, we thus display the support of $f^{\text{dec}}(\vdelta, \vgamma) - f^{\text{dec}}(\mathbf{0}, \vgamma)$, where $\mathbf{0}$ is an all-zero tensor; the purpose of this subtract is to offset those image values solely influenced by batchnorm and additive biases but not by the indicator code. These synthesized results are visualized in Figure \ref{fig:unet_atoms}. 

As it can be seen, compared to the randomly initialized U-Net (Figure \ref{fig:unet_atoms}\textbf{a}), the representations of learned U-Nets (Figure \ref{fig:unet_atoms}\textbf{b} and \textbf{c}) are organized in a more structured way at each scale. Compared to the (linear) representations learned by the MUSC, the (nonlinear) U-Net atoms much less resemble the classical oriented multiresolution systems such as curvelets.

\begin{figure*}[!ht] 
\centering
\includegraphics[width= 0.32 \textwidth]{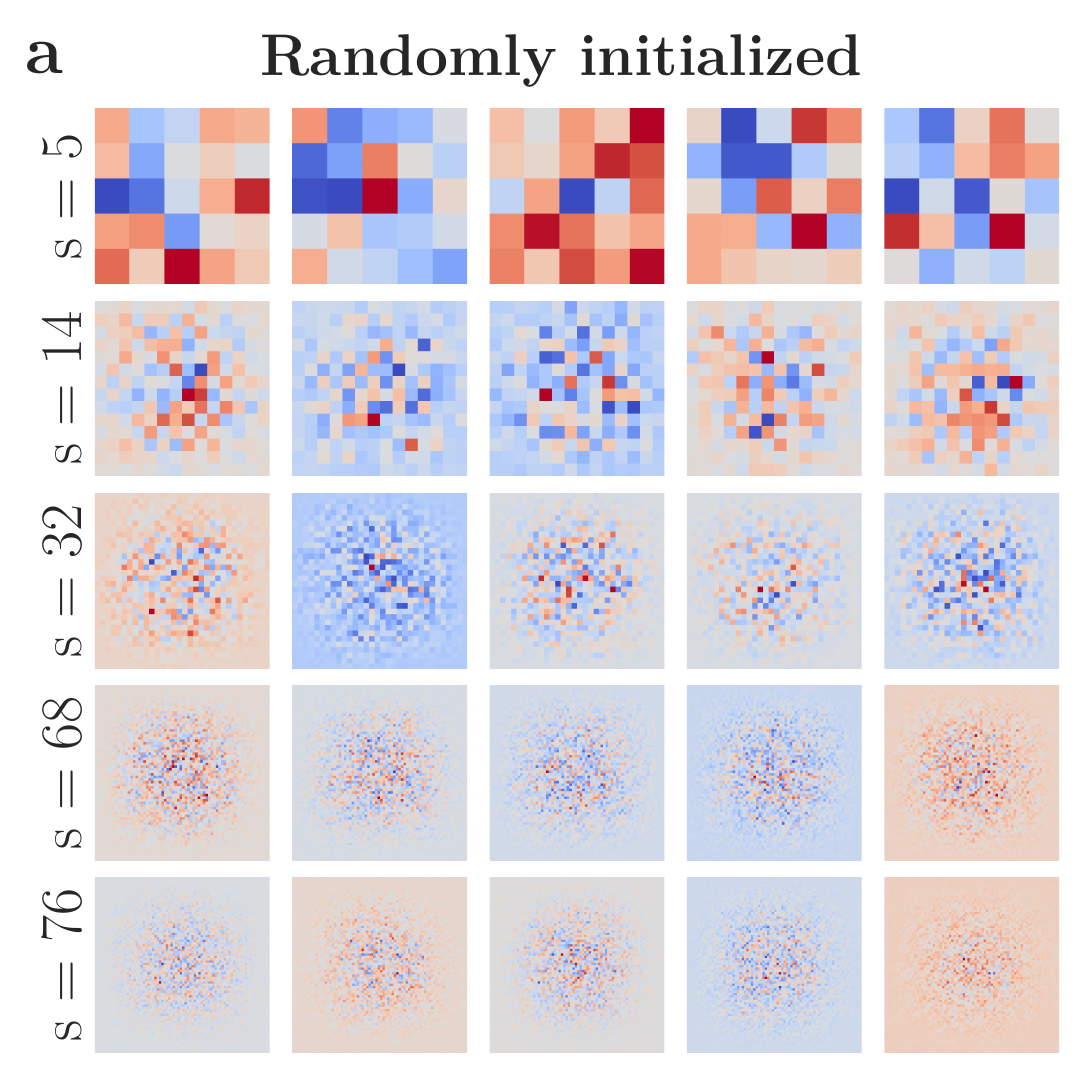}  
\includegraphics[width= 0.32 \textwidth]{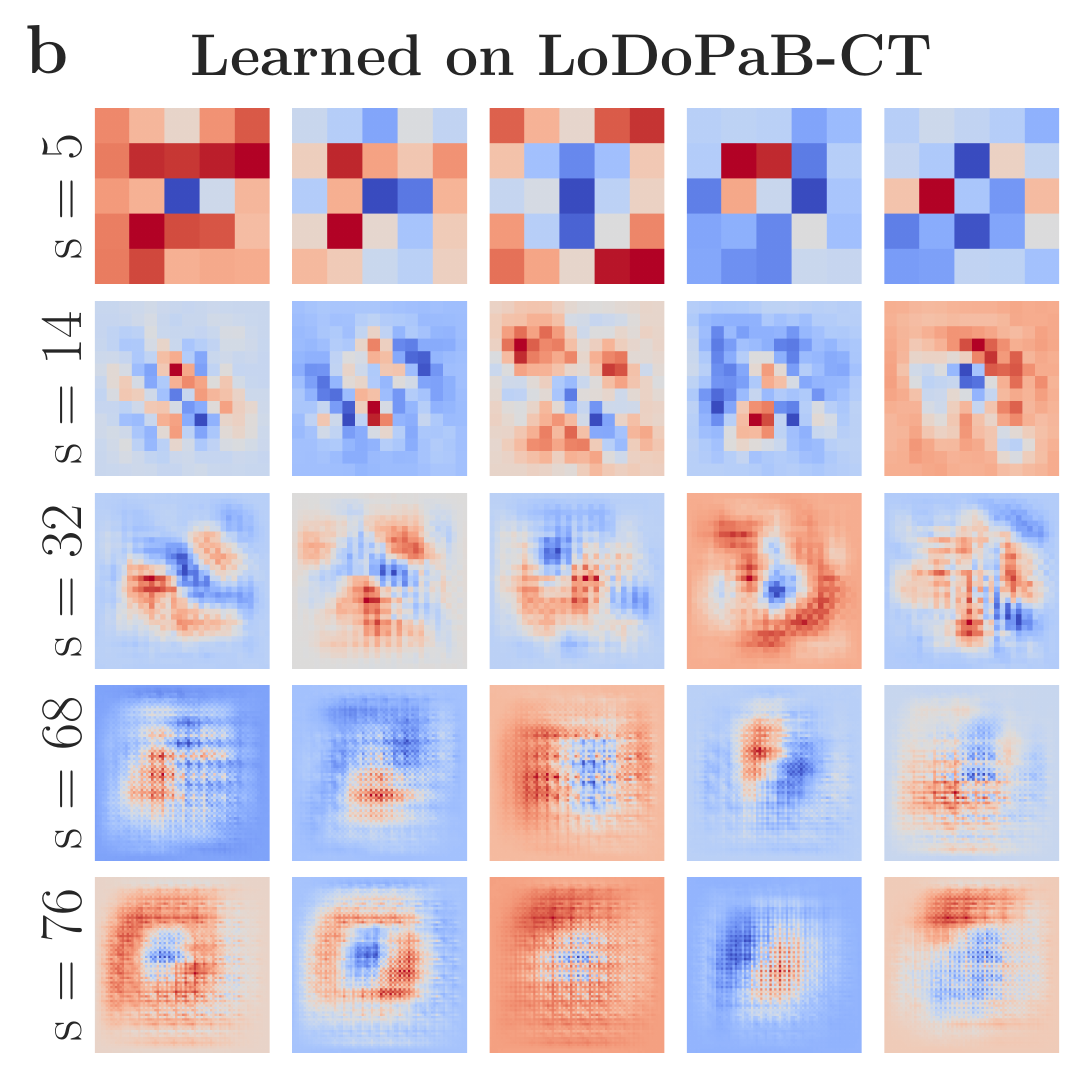} 
\includegraphics[width= 0.32 \textwidth]{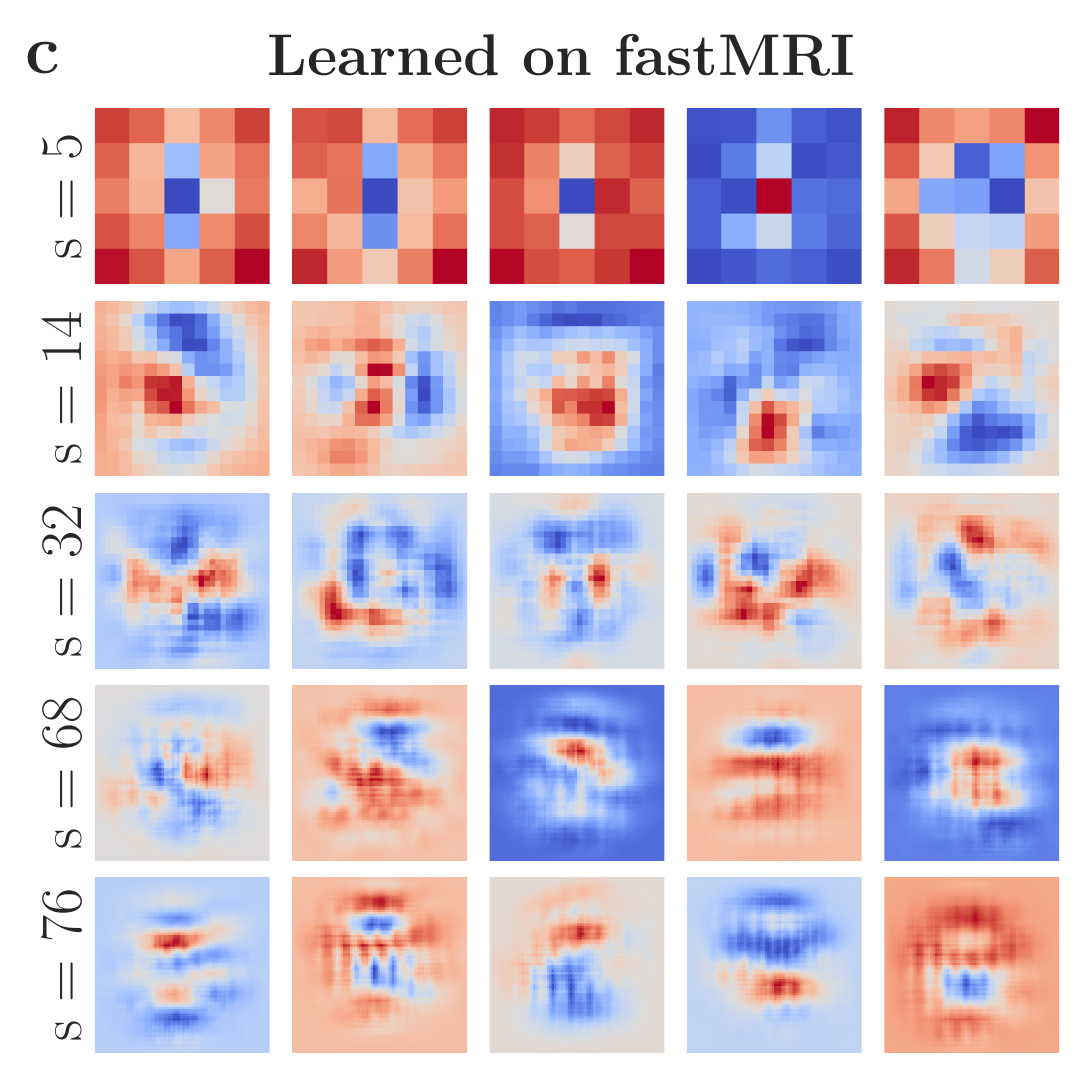} 
\vspace*{0.5mm}
    \caption{\textbf{Indicator-code-induced images of U-Nets with randomly initialized (left panel) and learned parameters based on the LoDoPab-CT (middle panel) and fastMRI (right panel).} For all panels, each row corresponds to a support size (denoted by $s$) of the receptive field of a convolution layer. For the visualization purpose, these images are normalized into the range $[-1,1]$. \label{fig:unet_atoms}}
\end{figure*}

\section{Super-resolution \label{appendix:superresolution}}

 CSC models achieve competitive performance in image super-resolution \cite{Gu2015convolutional, He2021image}. We train an out-of-the-box MUSC for this task to study the sparsity patterns of its learned representation in Section \ref{subsection:probing}. We follow the protocol in earlier work \cite{Schulter2015fast,Kim2016accurate,Lai2017deep} to train MUSC on the DIV2K dataset \cite{Agustsson2017ntire}. Low-resolution images are prepared by downscaling high-resolution images by a factor of four. We used bicubic interpolated images up-scaled from low-resolution images as model inputs and high-resolution images as model targets. The trained models were evaluated on standard datasets including Set-14 \cite{Zeyde2010single}, Set-5 \cite{Bowden2012low},  B-100 \cite{Martin2001database}, and Urban-100 \cite{Huang2015single}. Table~\ref{tab:super_resolution_results} shows the performance of the trained models. We observe that single-scale CSC has an edge of all other models, indicating the limited usefulness of large filters and the benefits of trading them off for a large number of small-support convolutional channels, as also discussed in the main text.

\begin{table}[!ht]
    \centering
    \begin{tabular}{l   c  c c c} \toprule
      &  Set-5 & Set-14 &  B-100 &  Urban-100 \\ \midrule
      ScSR \cite{Yang2010image}         &   {29.07}  &  {26.40}   &   {26.61}  &   {24.02}  \\
      A+ \cite{Timoftead2015justed}      &   {30.17} &  {26.94}   &   {26.81}  &   {24.29}  \\
     CSC \cite{Gu2015convolutional} (results taken from \cite{He2021image}) & 31.82 & 28.29 & 27.44 & 25.59 \\
     U-Net                     &  {30.52}   &  {27.43}   &  {27.04}   &  {24.68}   \\
     MUSC (ours)                        &  {31.30}   &  {28.01}   &  {27.20}   &  {25.08} \\ \midrule
    \end{tabular} %
	\vspace*{1mm}
    \caption{\textbf{Performance on the single-image super-resolution task with a scaling factor of four.}  \label{tab:super_resolution_results}}
\end{table}
\end{appendices}

\end{document}